\PassOptionsToPackage{noend}{algorithmic}

\documentclass[nohyperref]{article}

\usepackage{microtype}
\usepackage{graphicx}
\usepackage{booktabs} 

\usepackage{hyperref}
\hypersetup{
    colorlinks,
    linkcolor={red!50!black},
    citecolor={blue!50!black},
    urlcolor={blue!80!black}
}

\hypersetup{
    pdftitle={Ambiguous Strategic Classification}
}

\usepackage[accepted]{icml2026}

\usepackage{amsmath}
\usepackage{amssymb}
\usepackage{mathtools}
\usepackage{amsthm}

\usepackage[capitalize,noabbrev]{cleveref}

\usepackage{bbm, dsfont, bm}
\usepackage{subcaption}
\usepackage{tikz}
\usepackage{booktabs,rotating,multirow}
\usepackage{adjustbox}
\usepackage{enumitem}
\usepackage[hang,flushmargin]{footmisc}
\usepackage[T1]{fontenc}
\usepackage[scaled=0.85]{sourcecodepro}
\usepackage[leftcaption]{sidecap}
\sidecaptionvpos{figure}{t}
\usepackage{relsize}
\usepackage{accents}
\usepackage{booktabs}

\theoremstyle{plain}
\newtheorem{theorem}{Theorem} 
\newtheorem{observation}{Observation} 
\newtheorem{lemma}{Lemma}
\newtheorem{corollary}{Corollary} 
\theoremstyle{definition}
\newtheorem{definition}{Definition}

\theoremstyle{remark}

\icmltitlerunning{}

\renewcommand{\paragraph}[1]{\textbf{#1}\,\,}
\newcommand{\squeeze}{\looseness=-1}

\newenvironment{itemizetight}
  {\begin{itemize}[leftmargin=1em, topsep=0em, parsep=0em]}
  {\end{itemize}}

\newcommand{\algocmnt}[1]{\hfill \darkgray{$\triangleright$ \emph{#1}}}

\newcommand{\red}[1]{{\leavevmode\color{red}{#1}}}
\newcommand{\blue}[1]{#1} 
\newcommand{\green}[1]{{\leavevmode\color[RGB]{0,128,0}{#1}}}
\newcommand{\purple}[1]{{\leavevmode\color[RGB]{128,0,255}{#1}}}
\newcommand{\orange}[1]{} 
\newcommand{\cyan}[1]{} 
\newcommand{\crimson}[1]{} 
\newcommand{\yellow}[1]{} 

\newcommand{\darkgray}[1]{{\leavevmode\color[RGB]{120,120,120}{#1}}}

\newcommand\todo[1]{{\red{TODO: {#1}}}}

\newcommand\toref{\red{[REF]}}

\newcommand\extended[1]{} 
\newcommand{\nir}[1]{\green{[NIR: {#1}]}}
\newcommand{\niradd}[1]{\green{{#1}}}
\newcommand{\todonir}{\textbf{\niradd{[@NIR]}}}

\newcommand{\ivriadd}[1]{\purple{{#1}}}
\newcommand{\todoivri}{\textbf{\ivriadd{[@IVRI]}}}

\newcommand{\naive}{na\"{\i}ve}

\newcommand\expect[2]{\mathbbm{E}_{#1}{\left[ {#2} \right]}}

\newcommand{\one}[1]{\mathds{1}{\{{#1}\}}}

\DeclareMathOperator*{\sign}{sign}
\DeclareMathOperator*{\argmax}{argmax}
\DeclareMathOperator*{\argmin}{argmin}

\renewcommand{\underbar}[1]{\underaccent{\bar}{#1}}

\newcommand{\acc}{{\mathtt{acc}}}

\newcommand{\set}[2]{\{ #1 \mkern1mu:\mkern1mu #2 \}}

\newcommand{\X}{{\cal{X}}}

\newcommand{\Y}{{\cal{Y}}}
\newcommand{\R}{\mathbb{R}}

\newcommand{\N}{{\cal{N}}}

\newcommand{\yhat}{{\hat{y}}}

\newcommand{\rtilde}{{\tilde{r}}}
\newcommand{\ztilde}{{\tilde{z}}}
\newcommand{\ttilde}{{\tilde{t}}}
\newcommand{\Ftilde}{{\widetilde{F}}}

\newcommand{\dist}{{D}}
\newcommand{\smplst}{{S}}
\newcommand{\loss}{\ell}
\newcommand{\lossapx}{\tilde{\loss}}

\newcommand{\reg}{{R}}

\newcommand{\br}{\Delta}
\newcommand{\amb}{\Gamma}
\newcommand{\Amb}{\mathcal{G}}
\newcommand{\bmin}{\underbar{b}} 
\newcommand{\bmax}{\bar{b}} 
\newcommand{\wmin}{\underbar{w}} 
\newcommand{\wmax}{\bar{w}} 
\newcommand{\Hlin}{H_{\mathrm{lin}}}
\newcommand{\aop}{\mathbb{P}} 

\newcommand{\xproj}{x^*} 

\newcommand{\burden}{\mathrm{burden}}

\newcommand{\dataset}[1]{{\texttt{#1}}}

\newcommand{\shinge}{\loss_{\mathrm{strat}}} 
\newcommand{\ahinge}{\loss_{\mathrm{amb}}} 
\newcommand{\ahingeapx}{\lossapx_{\mathrm{amb}}}

\begin{document}

\twocolumn[
\icmltitle{Ambiguous Strategic Classification}



\icmlsetsymbol{equal}{*}

\begin{icmlauthorlist}
\icmlauthor{Ivri Hikri}{technion}
\icmlauthor{Nir Rosenfeld}{technion}
\end{icmlauthorlist}

\icmlaffiliation{technion}{Faculty of Computer Science, Technion -- Israel Institute of Technology, Haifa, Israel}

\icmlcorrespondingauthor{Nir Rosenfeld}{nirr@cs.technion.ac.il}

\icmlkeywords{Machine Learning, ICML}

\vskip 0.3in
]



\printAffiliationsAndNotice{}  

\begin{abstract}

A common assumption in strategic classification is that the classifier is public knowledge.
However, it remains unclear whether, and why,
a system would choose to commit to full disclosure.
We study a setting in which
regulation requires the system to disclose 
some, but not all, of the information.
This induces a learning task in which the learner must jointly
optimize the classifier and the uncertainty surrounding it.
To this end, we adopt from robust mechanism design
the notion of \emph{ambiguity},
which in our setting allows the learner to reveal a set or range of possible classifiers,
while privately choosing which of them to ultimately realize.
We investigate how ambiguity affects the learning task,
develop efficient algorithms for computing best-responses and training,
and empirically explore strategic learning and its outcomes
in this novel setting and using our approach.

\end{abstract}

\section{Introduction} \label{sec:intro}

Strategic classification studies learning in a setting where users can
modify their features to obtain favorable predictive outcomes
\citep{bruckner2012static,hardt2016strategic}.
This applies to domains such as loan approval, university admissions,
job hiring, social welfare benefits, medical program eligibility, and insurance claims.
The original problem formulation 
cleanly captures a form of tension that can arise between
a learning system
and its users,
but relies on several key assumptions,
one of which is that users know the classifier exactly.
This assumption enables the system to anticipate how users will respond to any choice of classifier,
and as a result, preemptively train a classifier to be strategically robust.
Thus, and perhaps surprisingly,
granting users full access to 
information regarding the classifier
can be beneficial to the system
\citep{ghalme2021strategic,bechavod2022information},
this providing a utilitarian justification and motivation for transparency.

Nonetheless, and even though transparency can be helpful,
this does not imply that full disclosure is always the optimal strategy.
Given that classifier information begins as private to the system,
it is generally within the system's power to decide what information
to reveal---and what not to.
As such, our work aims to understand what happens when a strategic system controls not only the choice of classifier, but also the information regarding the classifier that is made public.
By interpolating between the settings of no information and full information,
our framework allows to
ask when, if, and how learning can exploit user uncertainty 
to promote its own goals;
what implications this may carry for users;
and which forms of regulation are required or can be useful.
\squeeze

To model user uncertainty regarding the classifier,
we borrow from the field of robust mechanism design,
and assume users make decisions under \emph{ambiguity}.
Ambiguity captures settings where the realized outcome
is one of several pre-determined possibilities,
but where possible outcomes are not attached to any probabilities.
This contrasts the common notion of statistical uncertainty
(which has been considered in strategic classification, e.g.,
\citet{jagadeesan2021alternative,cohen2024bayesian}),
and is appropriate for settings where a strategic entity,
rather than randomness,
determines both the set of possible outcomes
and the eventual realization.


Consider for example a firm whose hiring process includes a coding interview.
The firm announces that interviews will focus on either problem-solving skills or algorithmic fundamentals, but will include only one of them.
Applicants who wish to succeed must therefore
prepare for, and possibly improve in, \emph{both} areas.
This creates an advantage for more qualified applicants who are already
both skilled and knowledgeable---%
not because only they can pass the interview,
but because candidates proficient only in one topic
cannot know if they will pass.
If improving across both topics proves too costly for such candidates,
the hiring process becomes effective due to its inherent ambiguity.

Following the above example,
our work operationalizes the idea of ambiguity within the strategic classification framework
by assuming that users are given an \emph{ambiguous set} of possible classifiers, e.g., with parameters in some bounded range.
Of these, users know that
one will realize---but 
do not know
which one.
Consequently, we model users as risk averse,
i.e., acting to maximize utility under the worst-case possible outcome
\citep{gilboa1989maxmin}.
The goal of learning in this setting,
which we refer to as \emph{ambiguous strategic classification},
is then to maximize accuracy 
by jointly learning the true classifier and a corresponding ambiguous set, which shapes user responses.
To restrict the power of learning to exploit user uncertainty,
we introduce a regulator entity that controls the type and
degree of ambiguity permitted to the learner.
This allows to balance between the learning objective (namely accuracy)
and societal goals (e.g., social welfare or burden).

\extended{
\todo{maybe cite works on regulation, e.g.:
"Modeling the Economic Impacts of AI Openness Regulation"; check refs within \todonir}
}



Since ambiguity protects system information
by restricting user access,
it is tempting to assume that higher ambiguity is
useful to the system and harmful to its users.
However, one of our conclusions is that 
outcomes are best under the ``right'' amount of ambiguity.
Hence, the main conceptual challenge of learning
is to correctly balance between what information is revealed,
and what is kept private,
in terms of how this affects learning outcomes through downstream user behavior.
\squeeze

The technical challenges of learning in this setting are twofold.
First, ambiguity gives rise to user behavior in the form of max-min best responses.
Even for the common choice of linear classifiers and 2-norm costs, these can be challenging to solve.
Nonetheless,
and for three distinct and increasingly complex ambiguity types,
we show how best responses can be computed efficiently
by exploiting the structure of the learning task.
These are enabled by reduction to appropriate and tractable projection problems.
Second, under ambiguity, standard losses (e.g., log loss) are deemed inefficient,
and appropriate strategic losses (e.g., the strategic hinge) become intractable.
For this, we devise a relaxation of the generalized strategic hinge that,
coupled with our efficient projection algorithms,
provides a streamlined differentiable loss suitable for ambiguous learning tasks.

We conclude with a series of experiments
that explore our setting of ambiguous strategic learning.
Using both synthetic and real data coupled with simulated user behavior,
we demonstrate how our approach can find effective ambiguous classifiers,
and examine the effects of regulation on learning and its outcomes
for both system and the user population.
\squeeze



\subsection{Related work} \label{sec:related}

\paragraph{Strategic classification.}
The field of strategic classification
has gained substantial traction since its introduction
\citep{bruckner2012static,hardt2016strategic}.
Many works have since aimed to extend the original framework,
in particular by supporting different notions of uncertainty.
One line of research
considers learning under fixed but unknown costs,
including in the online \citep{dong2018strategic,ahmadi2021strategic},
multi-round batch \citep{lechner2023strategic},
and one-shot \citep{rosenfeld2024oneshot} settings;
under personalized costs \citep{lechner2023strategic,shao2024strategic};
and for general manipulation graphs
\citep{ahmadi2023fundamental,cohen2024learnability}.
Another thread focuses on uncertainty regarding the environment,
such as through causality \citep{miller2020strategic,harris2022strategic,horowitz2023causal},
feedback \citep{harris2023strategic},
or market forces \citep{sommer2025learning}.
\squeeze

Closer to ours are works that
model uncertainty regarding the classifier,
either as missing information
\citep{ghalme2021strategic,bechavod2022information,barsotti2022transparency}
or due to noise
\citep{jagadeesan2021alternative,geary2025strategic}.
In particular, \citet{ghalme2021strategic}
make the point that user-side uncertainty can entail
system-side uncertainty regarding best responses.
This raises questions of 
how to contend with private user information \citep{levanon2022generalized},
whether and when systems should release their classifier
\citep{shao2025should},
and how the system can benefit by controlling user uncertainty
\citep{cohen2024bayesian}.
The latter's setting is perhaps closest to ours, 
but is distinct in that:
(i) the system adopts a Bayesian persuasion perspective,
(ii) both users and the system rely on a joint prior,
(iii) users maximize expected utility w.r.t. this prior; and
(iv) uncertainty sets are constructed over a given, fixed classifier
(i.e., there is no learning).
In contrast, we consider worst-case ambiguity rather than statistical uncertainty, do not require priors or shared information,
and focus predominantly on learning.
\squeeze






\paragraph{Ambiguity and robustness.}
Attaining robustness through worst-case modeling
is central to both machine learning and economics.
In learning, minimax objectives are common for attaining e.g.
adversarial robustness \citep{Goodfellow2015Adversarial}
and distributional robustness \citep{Sinha2018DRO}.
Within strategic learning, 
this perspective has been adopted to contend with 
strategic feature selection \citep{nair2022strategic} or
unknown costs \citep{levanon2021strategic}, 
and to bridge strategic and adversarial training \cite{ehrenberg2025adversaries}.
Economics makes a distinction between uncertainty
with attached probabilities, namely risk,
and without them, known as
Knightian uncertainty \citep{Knight1921}.
Within mechanism design, ambiguity captures settings
where such uncertainty, as perceived by one agent,
can be strategically shaped by another.
The goal is then to design mechanisms that are robust to or that can leverage such uncertainty \citep{bergemann2005robust}.
This general idea has been used for designing robust
auctions \citep{bergemann2016informationally},
contracts \citep{carroll2015robustness},
persuasion schemes \citep{dworczak2022preparing},
and markets \citep{rigotti2008subjective},
as some examples.
Our paper introduces the notion of ambiguity robustness
into strategic classification.



\section{Setting} \label{sec:setting}

Our setup builds on conventional strategic classification in the batch setting
\citep{hardt2016strategic,levanon2021strategic}
and generalizes it to support ambiguity.
Users are described by features $x \in \X = \R^d$
and binary labels $y \in \Y = \{ \pm 1 \}$,
\extended{\nir{$y \in \{0,1\}$ or $\{\pm 1\}$? decide and check for consistency. might need to change all $2\|w\| \to \|w\|$}}
with pairs $(x,y)$ sampled iid from some unknown joint distribution $\dist$.
The primary goal of learning is to find a classifier $h$
from a chosen class $H$ that makes accurate predictions,
$h(x) = \yhat \approx y$.
The challenge is that once $h$ is deployed, users respond by strategically modifying their
features $x \mapsto x^h$ if this secures them a positive prediction,
and is cost effective.
Given a cost function $c(x,x')$ that governs modification costs,
users \emph{best-respond} as:
\begin{equation}
\label{eq:best_response-standard}
x^h = \br_h(x) \triangleq \argmax_{x'} \, h(x') - \alpha c(x,x')
\end{equation}
\extended{\todo{consider moving $\alpha$ to utility instead of cost}}
where $\alpha>0$ determines the tradeoff between the utility users gain from a positive prediction, namely $h(x')$, and the incurred cost $c(x,x')$.
Our best response algorithms in Sec.~\ref{sec:br}
apply to any convex cost.
Our learning algorithm in Sec.~\ref{sec:method}
applies to a broad family of costs of the form $\phi(\|A^{1/2}(x-x')\|_p)$ where $\|\cdot \|_p$ is any $p$-norm ($p\geq 1$), $\phi$ is any non decreasing function, and $A$ is any PD matrix \cite{rosenfeld2024oneshot}.
For simplicity we focus throughout mostly on the common special case of 2-norm costs,
$c(x,x')=\|x-x'\|_2$, 
in which points can move to a distance of at most \blue{$2/\alpha$} in all directions.
We comment on extending to the generalized case where useful,
and defer its full treatment to the Appendix.



The goal of learning is to find a strategically robust classifier,
this by aiming to maximize \emph{expected strategic accuracy}:
\begin{equation}
\label{eq:expected_strategic_accuracy_objective}
\argmax_{h \in H} \expect{\dist}{\one{h(\br_h(x))=y}}
\end{equation}
In practice, this is achieved by optimizing an empirical surrogate objective
over a training set $\smplst = \{(x_i,y_i)\}_{i=1}^m$
sampled iid from $\dist$ and using a proxy loss (e.g., hinge).

\paragraph{Ambiguity.}
Standard strategic classification assumes that users know $h$ exactly-%
as implied in Eq.~\eqref{eq:best_response-standard}.
We generalize this to support users acting under ambiguity regarding which classifier will be used.
Formally, 
let $\amb \subseteq H$ be a set of possible classifiers,
which we refer to as the \emph{ambiguity set}.
We will assume that ambiguous sets are always truthful in that they must include the true $h$.
Our main modeling assumption is that users are risk averse:
when facing ambiguity,
they will act to maximize utility under the worst-case outcome.
This gives rise to the \emph{max-min best-response} mapping:%
\footnote{A plausible alternative is to model users as Bayesian agents who maximize expected utility under probabilistic beliefs.
Such beliefs constitute private information that the learner cannot observe, and therefore cannot anticipate
without additional assumptions or mechanisms (e.g., elicitation).
In contrast, we consider settings in which the learner controls the uncertainty itself.
Our approach is appropriate when users are willing to invest effort to guarantee a positive prediction---rather than put faith in favorable odds.}
\squeeze
\begin{equation}
\label{eq:best_response-conservative}
\br^\amb(x) \triangleq \argmax_{x'} \, \min_{h \in \amb} \, h(x') - \alpha c(x,x')
\end{equation}
Note that setting $\amb=\{h\}$ recovers the standard strategic classification setting (Eq.~\eqref{eq:best_response-standard}),
whereas $\amb = \Y^\X$
reverts to the non-strategic setting
since all responses are suppressed.
We refer to all $h' \in H$ as \emph{possible} classifiers,
and distinguish between \emph{realized} (or \emph{true})
$h$ that will be used in practice and which determines effective user utility,
and all other \emph{non-realized} $h' \neq h$ that only affect user behavior
via Eq.~\eqref{eq:best_response-conservative}.

\paragraph{Ambiguity sets.}
We consider several types of ambiguity sets
of increasing structural complexity.
For concreteness,
and as is common in the strategic learning literature,
we focus on linear classifiers as a base class,
$\Hlin = \set{h_{w,b}(x) = \sign(w^\top x + b)}{w \in \R^d, b \in \R }$.
Nonetheless, and as we will see,
ambiguity introduces nonlinearities that act to 
increase the effective model class capacity.

\begin{figure*}[t!]
  \centering
  \includegraphics[width=\textwidth]{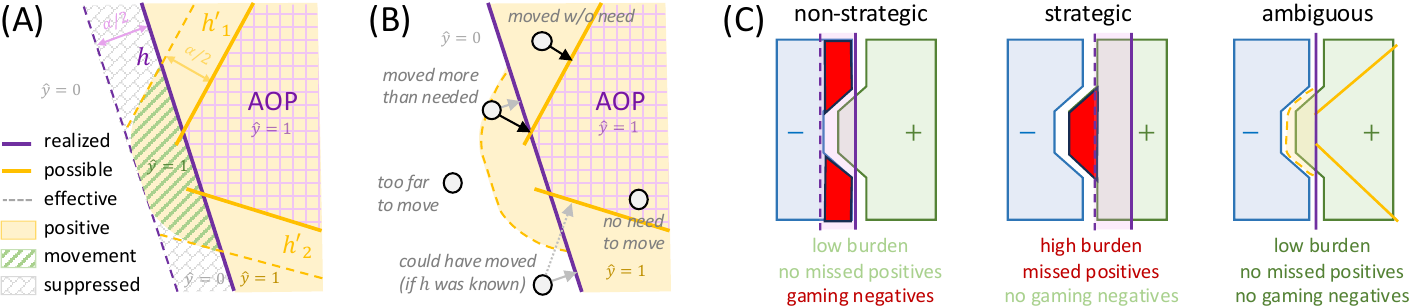}
  \caption{%
  \textbf{(A)} The ambiguous strategic setup, demonstrated for discrete ambiguity with $\amb=\{h,h'_1,h'_2\}$. Points move only if they can reach the AoP (purple). Adding possible classifiers $h'$ suppresses movement (grey) by shrinking the AoP and consequently the movement region (green). Points are $x$ classified as positive (yellow) 
  if either $h(x)=1$ or they can move.
  \textbf{(B)} 
  Ambiguity gives rise to more nuanced strategic behavior and outcomes.
  \textbf{(C)}
  Simple example comparing optimal non-strategic, strategic, and ambiguous strategic classifiers.
  For the non-strategic classifier, movement (purple region) enables gaming behavior.
  For the strategic classifier, movement (purple region) induces burden,
  and linearity entails errors.
  The ambiguous strategic classifier (purple line) leverages ambiguity (yellow lines) to express
  a non-linear effective boundary that separates that data and entails low social burden for points that move (yellow region).
  }
  \label{fig:illust}
\end{figure*}

As a motivating example,
consider the task of designing a qualifying exam, e.g., academic or for job hiring. Features represent answer correctness on questions of different topics or subjects, and weights represent the relative importance of each topic to the total score.
The types are:
\begin{itemizetight}
\item \textbf{Offset}:
Fixed known weights $w \in \R^d$,
but an ambiguous offset $b$
in a given range:
$\amb = \set{h_{w,b}}{b \in [\bmin,\bmax]}$.
This corresponds to determining in advance the importance of each topic via a fixed score function, but keeping the acceptance threshold ambiguous.

\item \textbf{Discrete}:
A set of $k$ distinct classifiers,
$\amb = \{h_1, \dots, h_k\}$, 
where $h_j = h_{w_{(j)},b_{(j)}}$ for each $j \in [k]$.
This corresponds to releasing a set of possible exams, while committing to eventually using one of these.

\item \textbf{Continuous}:
Ambiguous weights $w$ with bounded entries $w_i \in [\wmin_i,\wmax_i]$,
and shared $b$:
$\amb = \set{ h_{w,b}}{w \in [\wmin,\wmax]}$.%
\footnote{Our results also support continuous ambiguity in both $w$ and $b$, enabled by replacing $b$ with $\bmin$ throughout.
For clarity, and since it provides only mild further expressive power, we focus on continuous ambiguity in $w$, and treat offset ambiguity independently.}
This corresponds to stating for each topic a range for its possible importance in the final weighted sum.
\end{itemizetight}
We will be interested in understanding 
how the type and degree of ambiguity 
affects learning and its outcomes,
as determined by the type and size of $\amb$.






\paragraph{Controlling ambiguity.}
A central question is who determines the ambiguity users face, and how.
If $\amb$ is fixed and predetermined, e.g. by a regulator, then the problem reduces to learning under $\amb=H$, which means that users act `in the dark'
(akin to \citet{ghalme2021strategic}).
At the other extreme, if the learner has complete control over $\amb$,
then it can entirely suppress strategic behavior---%
for example, by setting $\amb=\{h,1-h\}$
(though as we argue, this may not be optimal).
The more interesting settings, and our focus,
lie between these extremes:
the regulator imposes \emph{restrictions} on $\amb$,
while the learner remains free to choose $\amb$ from within the feasible set.
For example, the regulator may constraint the size of $\amb$
(e.g., fix $k$ for discrete, or upper-bound $\|\wmax-\wmin\|_{\infty}$ for continuous),
or require all possible $h' \in \amb$ to be similar to the true $h$
(e.g., in parameters or predictions).

\extended{\blue{or demand that all $h' \in \amb$ satisfy some global properties
(e.g.,  minimal accuracy level or fairness criteria).}}

\paragraph{Social burden.}
Following \citet{milli2019social},
we use \emph{social burden} to measure the negative impact of strategic learning on the user population.
This is defined as the average cost needed for positive users to gain positive predictions.
Note that these are theoretical costs which can exceed \blue{$2/\alpha$}.
The natural generalization to our setting is:
\begin{equation}
\label{eq:social_burden_ambiguous}
\burden(\amb) =
\frac{1}{|\{i: y_i=1\}|}
\sum_{i:y_i=1}
\min_{x':\,\forall h\in \amb,\, h(x')=1} c(x_i,x') \nonumber
\end{equation}
which captures how much positive users will pay as risk averse to guarantee a positive prediction given $\amb$.
\extended{
For a known classifier $h$:
\begin{equation}
\label{eq:social_burden_standard}
\burden(h) = \frac{1}{m_1} \sum_{i:y_i=1} \min_{x':h(x')=1} c(x_i,x')
\end{equation}
}

\extended{\blue{The gap between Eq.~\eqref{eq:social_burden_ambiguous} and Eq.~\eqref{eq:social_burden_standard} can then be used to quantify
the increase in social burden due to ambiguity.}}

\section{Preliminaries} \label{sec:analysis}
We begin with several basic properties 
that will help gain intuition and be useful downstream.
Proofs in Appendix~\ref{appx:proofs}.

\subsection{How points move}
The key distinction of our setting is that users move only
if this guarantees them a positive prediction by \emph{all} possible classifiers.
This idea is captured by the following definition.
\begin{definition}[AoP]
For a given $\amb$, its \emph{area of positivity} is:
\begin{equation}
\label{eq:aop}
\aop(\amb) = \set{ x \in \X}{h'(x)=1 \,\,\, \forall \, h' \in \amb}
\end{equation}
i.e., the set of points classified as positive by all possible $h'$.
\end{definition}
Since users maximize for worst-case outcomes, 
it will be worthwhile for a point to move
only if the AoP is within its reach.
Thus, the \emph{induced movement region} is defined
as the set of all points $x$ that are outside the AoP
but at distance (i.e., cost) at most \blue{$2/\alpha$} from it.
Further, since users minimize costs,
movements are always onto the boundary of the AoP.
These are illustrated
in Fig.~\ref{fig:illust} (A).

\paragraph{Non-linearity via (linear) ambiguity.}
When $\amb$ comprises linear classifiers, 
$\aop(\amb)$ is by definition a (convex) intersection of halfspaces.
Thus, while the classifier itself remains linear,
ambiguity introduces non-linearities into how users respond.
This in turn gives learning the capacity to express effective non-linear decision boundaries indirectly by 
influencing user behavior.

Whereas in standard strategic classification
(i.e., $\amb=\{h\}$)
movement and classification outcomes are coupled,
under ambiguity, this connection breaks.
In particular, ambiguity works to suppress movement:
some points $x$ that could have attained a positive prediction
had they known the true $h$
may not move under ambiguity when it is
strictly more demanding to satisfy all $h' \in \amb$.
Meanwhile, other points may move---and pay the cost---unnecessarily.
We refer to the difference between
$\aop(\{h\})$ and $\aop(\amb)$
as the \emph{suppressed region},
and it is the non-linearity of this region that
shapes outcomes.
These ideas are illustrated in Fig.~\ref{fig:illust} \blue{(B)}.


\subsection{What ambiguity does}
The main mechanism through which ambiguity affects outcomes is
by controlling the size and shape of the AoP.
This effect is monotone in the following sense.
\begin{observation}
\label{lem:aop_monotone}
Let $\amb$ be an ambiguity set, then for any larger $\amb' \supseteq \amb$,
its corresponding AoP can only be smaller, i.e., $\aop(\amb') \le \aop(\amb)$.
\end{observation}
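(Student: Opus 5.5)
The plan is to prove the statement directly from the definition of the AoP in Eq.~\eqref{eq:aop}. We read the relation $\aop(\amb') \le \aop(\amb)$ as set inclusion, $\aop(\amb') \subseteq \aop(\amb)$. Enlarging the ambiguity set only adds constraints to the universally quantified condition defining the AoP, so the set of points satisfying all constraints can only shrink.

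The argument takes one step. Fix an arbitrary $x \in \aop(\amb')$. By Eq.~\eqref{eq:aop}, this means $h'(x)=1$ for every $h' \in \amb'$. Now take any $h' \in \amb$. Since $\amb \subseteq \amb'$, we also have $h' \in \amb'$, and hence $h'(x)=1$. Because $h'$ was arbitrary, $h'(x)=1$ holds for all $h' \in \amb$. This is exactly the condition $x \in \aop(\amb)$, which gives the inclusion.

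Two remarks complete the picture. First, the same reasoning can be phrased via the identity $\aop(\amb) = \bigcap_{h' \in \amb} \set{x}{h'(x)=1}$. Since $\amb' = \amb \cup (\amb' \setminus \amb)$, this yields $\aop(\amb') = \aop(\amb) \cap \aop(\amb' \setminus \amb)$, and an intersection is contained in each of its factors. For linear $\amb$, the identity also recovers the geometric picture: adding classifiers adds halfspaces to the intersection. Second, the degenerate edge case $\amb = \emptyset$ is excluded by the truthfulness requirement, which says $h \in \amb$. Even if it were allowed, the AoP of the empty set would be all of $\X$, and the inclusion would still hold trivially.

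There is no substantive obstacle here. The only point needing care is the interpretation of ``$\le$'' as set containment rather than a comparison of, say, measure. Containment implies the measure statement for any measure, so the stronger containment form is the one we prove.
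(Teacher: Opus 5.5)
Your proof is correct and is essentially the paper's argument: the paper notes only that the claim follows from the monotonicity of intersection, which is exactly your identity $\aop(\amb') = \aop(\amb) \cap \aop(\amb' \setminus \amb)$, or equivalently your elementwise inclusion argument. Reading $\le$ as set containment is the intended interpretation.
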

This follows directly from the monotonicity of intersection.
Note that in terms of accuracy,
a larger AoP is neither `good' nor `bad';
rather, what matters is the class ratio of points
for which movement is suppressed.
This is since accuracy improves when 
points with $y=0$ are driven out of the induced region
(since $\yhat = 0$), 
but degrades when such points have $y=1$
(since no longer $\yhat=1$).
Learning can therefore contend with (or exploit)
ambiguity by finding (linear) $h$ for which the (non-linear)
suppressed region due to $\aop(\amb)$ includes more negatives than positives.
Note that in general, the optimal $\amb$ will neither be
$\{h\}$ (full information)
nor $\Y^\X$ (full ambiguity),
but rather, should encode the `right' type and amount of ambiguity.
This is demonstrated in Fig.~\ref{fig:illust} \blue{(B)}.
\squeeze



\paragraph{Accuracy.}
For a given $\dist$ and $h$,
denote by $\acc(h)$ the standard (non-strategic) expected accuracy,
and by $\acc(h;\br)$ the expected accuracy when users best respond via $\br$.
A known result is that in standard strategic classification
user behavior can only degrade performance:
for any $\dist$ and any $h$,
$\acc(h;\br_h) \le \acc(h)$.
Furthermore, both attain the same theoretical optimum,
namely $\max_h\acc(h;\br_h) = \max_h \acc(h)$
\citep{rosenfeld2024oneshot}.
Interestingly,
ambiguity can help improve accuracy
\emph{beyond} what is possible absent strategic movement.
That is, and as we will see,
in certain settings
$\max_h \acc(h;\br_h^\amb) > \max_h \acc(h)$.
In fact, ambiguity can make non-separable data become separable,
this by using $\amb$ to ``linearize'' the data
in the shifted induced distribution.
Fig.~\ref{fig:illust} \blue{(C)} demonstrates this idea on a simple example with discrete ambiguity using $k=3$.



\paragraph{Social burden.}
Since increased ambiguity shrinks the AoP,
the cost of reaching it can only rise for users.
This has social implications;
a direct result of Lemma~\ref{lem:aop_monotone} is:
\begin{corollary}
\label{cor:social_burden_increases}
If $\amb' \supseteq \amb$
then $\burden(\amb') \ge \burden(\amb)$.
\end{corollary}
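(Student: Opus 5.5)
The plan is to reduce the corollary to Observation~\ref{lem:aop_monotone} by rewriting the per-user cost in the definition of $\burden(\amb)$ as a minimization over the area of positivity. The feasibility constraint in the inner minimization, $\forall h \in \amb,\ h(x')=1$, is exactly the membership condition in Eq.~\eqref{eq:aop}. So for each positive user $x_i$ I will write
\[
\min_{x':\,\forall h\in \amb,\, h(x')=1} c(x_i,x') \;=\; \inf_{x' \in \aop(\amb)} c(x_i,x'),
\]
with the convention that the infimum over an empty set is $+\infty$. This convention covers cases such as $\amb=\{h,1-h\}$, where no point is positive under all classifiers and the burden is naturally infinite. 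Writing the term as an infimum also sidesteps whether the minimum is attained, since $\aop(\amb)$ need not be closed when $h(x)=1$ is defined by a strict inequality.

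Next I will apply Observation~\ref{lem:aop_monotone}. If $\amb' \supseteq \amb$, then $\aop(\amb') \subseteq \aop(\amb)$, because any point classified positive by every $h' \in \amb'$ is in particular classified positive by every $h' \in \amb$. An infimum over a subset is never smaller than the infimum over the superset, so for every $i$ with $y_i=1$,
\[
\inf_{x' \in \aop(\amb')} c(x_i,x') \;\ge\; \inf_{x' \in \aop(\amb)} c(x_i,x').
\]
This inequality also holds in the extended reals when one or both sets are empty.

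Finally, I will sum these termwise inequalities over the positive users and divide by $|\{i: y_i=1\}|$. This normalizing constant is the same for $\amb$ and $\amb'$, because it depends only on the labels and not on the ambiguity set. Averaging preserves the inequality, which gives $\burden(\amb') \ge \burden(\amb)$. No step is a real obstacle. The only care needed is in the first step: treating the minimum as an infimum and handling an empty AoP, so that the monotonicity argument is valid in every case and not only when minimizers exist.
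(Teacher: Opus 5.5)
Your proposal is correct and follows the paper's route. The paper derives the corollary directly from Observation~\ref{lem:aop_monotone}: the inner minimization in $\burden(\amb)$ ranges over $\aop(\amb)$, and shrinking that feasible set can only raise each positive user's minimal cost. Your extra care with infima and empty AoPs is a harmless refinement that the paper leaves implicit.
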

Importantly, this should not be taken to imply that \emph{learning}
with looser ambiguity constraints necessarily increases social burden;
as we will see, 
the opposite is quite plausible.
\squeeze

\extended{
\blue{as is apparent for example in Fig.~\ref{fig:illust} \blue{(B)}}.
}


\subsection{Case study: Offset ambiguity}


Consider first an offset ambiguity task where 
and the goal is to learn $w$ and
$\bmin,\bmax \in \R$, possibly under some constraints,
and where users respond to $\amb=\set{h_{w,b}}{b\in[\bmin,\bmax]}$.
How does ambiguity affect learning outcomes in this case?
Our next result shows offset ambiguity has only limited effect.
\begin{lemma}
\label{lem:offset_amb}
Strategic classification with offset ambiguity
is equivalent to standard strategic classification
with a modified cost scale
\blue{$\alpha \le \alpha_\amb \le \alpha (1-\frac{\bmax-\bmin}{2\|w\|_2}\alpha)^{-1}$}.
\end{lemma}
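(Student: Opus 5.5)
We want to show that offset ambiguity behaves like standard strategic classification with a rescaled cost, and we plan to do this by computing the area of positivity and the resulting best responses explicitly.

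First, compute $\aop(\amb)$ for $\amb=\set{h_{w,b}}{b\in[\bmin,\bmax]}$. A point is positive under all $b$ in the range iff $w^\top x+b\ge 0$ for every such $b$. This holds iff $w^\top x+\bmin\ge 0$. Hence $\aop(\amb)$ is the halfspace of the single linear classifier $h_{w,\bmin}$.

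Next, plug this into Eq.~\eqref{eq:best_response-conservative}:
\begin{itemize}
\item The inner minimum equals $\one{x'\in\aop(\amb)}$, so the max-min best response is exactly the standard best response to $h_{w,\bmin}$.
\item With 2-norm costs, a point with $w^\top x+\bmin<0$ moves iff its distance to that hyperplane, $-(w^\top x+\bmin)/\|w\|_2$, is at most $2/\alpha$.
\item Points move onto the boundary of the AoP, so every mover gets a positive prediction from the realized $h=h_{w,b}$ as well.
\end{itemize}

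Then read off the classification outcome. Let $b\in[\bmin,\bmax]$ be the realized offset. A point is classified positive iff one of two things happens:
\begin{itemize}
\item $w^\top x+b\ge 0$, which covers points in the AoP and points with $w^\top x+\bmin<0\le w^\top x+b$ that do not move; or
\item $x$ moves.
\end{itemize}
Since $b\ge\bmin$, the union is the set $w^\top x+\bmin\ge -2\|w\|_2/\alpha$.

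Now compare with standard strategic classification for $h=h_{w,b}$ under cost scale $\alpha_\amb$. There, the positive region is $w^\top x+b\ge -2\|w\|_2/\alpha_\amb$. Equating the two thresholds gives
\[
\frac{2}{\alpha_\amb}=\frac{2}{\alpha}-\frac{b-\bmin}{\|w\|_2}.
\]
The induced classifications, and hence accuracy, therefore coincide.

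Solving, $\alpha_\amb=\alpha\bigl(1-\frac{(b-\bmin)\alpha}{2\|w\|_2}\bigr)^{-1}$. This is increasing in $b-\bmin\in[0,\bmax-\bmin]$, which gives the stated range: $b=\bmin$ yields $\alpha_\amb=\alpha$, and $b=\bmax$ yields the upper bound.

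The main subtlety is the case where the denominator is nonpositive, i.e.\ $b-\bmin\ge 2\|w\|_2/\alpha$. Then the mover region lies inside $\{h=1\}$, and the setting degenerates to a non-strategic one, which is the limit $\alpha_\amb\to\infty$. I would state the lemma under the implicit condition $\bmax-\bmin<2\|w\|_2/\alpha$, or treat this case separately.

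I would also remark two points. Costs borne by movers differ, which matters for burden, but not for accuracy. The equivalence is at the level of the induced decision rule: choosing $(w,\bmin,\bmax)$ is equivalent to choosing $(w,b)$ together with an effective scale $\alpha_\amb$ in the range.
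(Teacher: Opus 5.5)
Your proof is correct and follows essentially the same route as the paper: reduce the inner minimum to $b=\bmin$, so the AoP is the halfspace of $h_{w,\bmin}$; then equate the effective boundaries $w^\top x+b+\frac{2}{\alpha_\amb}\|w\|_2$ and $w^\top x+\bmin+\frac{2}{\alpha}\|w\|_2$, solve for $\alpha_\amb$, and bound it using $b\in[\bmin,\bmax]$. Your explicit union argument and the side condition $\bmax-\bmin<2\|w\|_2/\alpha$ make precise what the paper leaves implicit; note that the union step itself already needs $b-\bmin\le 2\|w\|_2/\alpha$, not just $b\ge\bmin$.
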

That is, Eq.~\eqref{eq:best_response-conservative} reduces to
Eq.~\eqref{eq:best_response-standard} but with $\alpha_\amb$ instead of $\alpha$.
This is since the AoP remains linear, and the movement region becomes
a band of width $2/\alpha$ on the negative side of $h_{w,\bmin}$,
rather than of the true $h_{w,b}$.
The effective value of $\alpha_\amb$ is determined by the choice of the
true $b$ in the range $[\bmin,\bmax]$.
As a result, offset ambiguity cannot help improve
the maximal attainable accuracy (beyond the non-strategic case),
but can suppress movement further,
and therefore can potentially improve the accuracy achieved in practice.

\section{Computing best responses} \label{sec:br}
The first step to strategic learning is computing best responses.
For standard, fully informed strategic classification,
standard best responses can be easily computed (see below).
However, under ambiguity,
max-min best-responses are no longer straightforward to solve.
We next show how such best responses can nonetheless be computed
efficiently for the types of ambiguity we consider.
These results will be key to our learning approach in Sec.~\ref{sec:method}.
We defer all proofs to Appendix~\ref{appx:proofs}.

\paragraph{Best responses projections.}
In the full information setting,
computing $\br_h(x)$ from Eq.~\eqref{eq:best_response-standard}
can be done in three steps:
(1) check if $h(x)=0$; if so,
(2) compute $\xproj$ as the projection of $x$ onto $h$, and
(3) modify $x \mapsto \xproj$ only if $c(x,\xproj) \le 2/\alpha$;
otherwise, keep $x$ unchanged.
When $h=h_{w,b}$ is linear and $c$ is 2-norm (or quadratic),
the projection admits a simple closed form solution:
$\xproj = x - \frac{w^\top x + b}{\|w\|}w$.
Our main observation here is that this general procedure
also applies to max-min responses $\br_h^\amb$ from Eq.~\eqref{eq:best_response-conservative},
but with the adjustment that conditional projections
are made onto the AoP:
\squeeze
\begin{equation}
\label{eq:projection_AOP}
\xproj = \argmin\nolimits_{x'} \|x-x'\|_2 
\,\,\,\, \text{s.t.} \,\,\,
x' \in \aop(\amb)
\end{equation}
which generalizes the standard case of $\amb=\{h\}$.
We next discuss how to compute Eq.~\eqref{eq:projection_AOP}
for each ambiguity type.
\squeeze

\subsection{Offset and discrete ambiguity}

\paragraph{The offset case.}
Let $\amb = \set{h_{w,b}}{b \in [\bmin,\bmax]}$ for some $w$ 
and $\bmin, \bmax \in \R$ with $\bmin \le \bmax$.
Due to Lemma~\ref{lem:offset_amb},
the AoP remains a hyperplane,
namely that defined by $w$ and $\bmin$.
Under 2-norm costs, projections are also linear,
and hence similar to the standard case,
but using \blue{$\bmin$} instead of $b$.
The effect of $\bmax$ on outcomes comes through
the need to modify the cost threshold for modification
to be $2/\alpha_\amb$, where
\blue{$\alpha_\amb = \alpha (1-\frac{b-\bmin}{2\|w\|_2}\alpha)^{-1}$}.
\squeeze

\paragraph{The discrete case.}
For a discrete $\amb = \{h_1,\dots,h_k\}$,
and for convex costs $c(x,x')$,
a projection $x^*$ can be formulated as the solution to
the convex program with linear constraints:
\begin{equation}
\label{eq:projection_QP-discrete}
\argmin\nolimits_{x'}
c(x,x')
\,\,\,\, \text{s.t.} \,\,\,
w_{(j)}^\top x' + b_{(j)} \ge 0 \,\,\,\, \forall j \in [k] 
\end{equation}
While there is no closed form solution for this projection,
it can be solved efficiently 
(for a reasonable choice of $k$)
using any convex program solver.
Note Eq.~\eqref{eq:projection_QP-discrete} can be extended to support
any class of convex classifiers expressable as tractable constraints.

\extended{\nir{give result on sequential greedy being suboptimal? might be too early because we only talk about projections here}}

\subsection{Continuous ambiguity}
Consider now the more challenging
continuous case, where ambiguity sets are of the form
$\amb=\set{h_{w,b}}{w \in [\wmin,\wmax]}$
for some $b \in \R$ and $\wmin,\wmax \in \R^d$.
Though the general template of Eq.~\eqref{eq:projection_QP-discrete} still applies,
rather than a discrete set,
the program now has a \emph{continuum} of constraints:
\begin{equation}
\label{eq:projection_QP-continuous}
\argmin\nolimits_{x'} 
c(x,x')
\,\,\,\, \text{s.t.} \,\,\,
w^\top x' + b \ge 0 \,\,\,\, \forall w \in [\wmin,\wmax] 
\end{equation}
for which there is no generic solution or solver.

Our first step is to construct an equivalent program with a discrete (yet large) constraint set that admits the same solution as
Eq.~\eqref{eq:projection_QP-continuous}.
Define the following set of weights:
\begin{equation}
\label{eq:exponential_constraint_set}
\Omega = \set{w \in \R^d}{w_i \in \{\wmin_i,\wmax_i\} \,\,\,
\forall \, i \in [d]}
\end{equation}
That is, $\Omega$ comprises all "extreme" weight vectors $w$,
with each coordinate $w_i$ taken to be either $\wmin_i$ or $\wmax_i$.
Note the size of $\Omega$ is $2^d$.
Next, define the program:
\begin{equation}
\label{eq:projection_QP-exponential}
\argmin\nolimits_{x'} 
c(x,x')
\,\,\,\, \text{s.t.} \,\,\,
w^\top x' + b \ge 0 \,\,\,\, \forall w \in \Omega
\end{equation}

\begin{lemma}
\label{lem:equivalent_proj}
The solution to Eq.~\eqref{eq:projection_QP-exponential}
is optimal for Eq.~\eqref{eq:projection_QP-continuous}.
\end{lemma}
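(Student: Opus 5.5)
The plan is to show that the two programs have exactly the same feasible set. They also share the objective $c(x,\cdot)$, so any optimal solution of Eq.~\eqref{eq:projection_QP-exponential} is then optimal for Eq.~\eqref{eq:projection_QP-continuous}. Write
\[
F_{\mathrm{cont}}=\set{x'}{w^\top x'+b\ge 0\ \forall w\in[\wmin,\wmax]},\qquad F_\Omega=\set{x'}{w^\top x'+b\ge 0\ \forall w\in\Omega}.
\]
Since $\Omega\subseteq[\wmin,\wmax]$, every constraint of Eq.~\eqref{eq:projection_QP-exponential} already appears in Eq.~\eqref{eq:projection_QP-continuous}. Hence $F_{\mathrm{cont}}\subseteq F_\Omega$ immediately.

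For the reverse inclusion $F_\Omega\subseteq F_{\mathrm{cont}}$, fix $x'\in F_\Omega$. For fixed $x'$, the map $w\mapsto w^\top x'+b$ is affine, hence concave, over the box $[\wmin,\wmax]$. Its minimum over the box is therefore attained at an extreme point, and the extreme points of the box are exactly the elements of $\Omega$.

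The minimizer can also be written explicitly, which avoids invoking extreme-point theory at all. The function separates across coordinates, so
\[
\min_{w\in[\wmin,\wmax]} w^\top x'+b=b+\sum_{i=1}^d \min\{\wmin_i x'_i,\ \wmax_i x'_i\}.
\]
This minimum is attained at $w^\ast$, defined by $w^\ast_i=\wmin_i$ if $x'_i\ge 0$ and $w^\ast_i=\wmax_i$ otherwise. Clearly $w^\ast\in\Omega$. Because $x'\in F_\Omega$, we get $w^\top x'+b\ge (w^\ast)^\top x'+b\ge 0$ for every $w$ in the box, so $x'\in F_{\mathrm{cont}}$.

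With the feasible sets equal, the two programs coincide: they have the same optimal value and the same set of minimizers. In particular, the solution of Eq.~\eqref{eq:projection_QP-exponential} is optimal for Eq.~\eqref{eq:projection_QP-continuous}. This argument uses only the convexity of $c$ implicitly, through existence of a solution; any objective would do, since the feasible sets agree. There is no real obstacle here. The one step needing care is the reverse inclusion, and the explicit coordinatewise minimizer $w^\ast$ settles it, including the degenerate case $x'_i=0$, where either endpoint works.
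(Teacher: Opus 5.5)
Your proof is correct and follows essentially the same route as the paper. Both arguments show that the two feasible sets coincide: one inclusion uses $\Omega\subseteq[\wmin,\wmax]$, and the other uses the coordinatewise worst-case vertex (take $\wmin_i$ when $x'_i\ge 0$ and $\wmax_i$ otherwise). The only difference is cosmetic: you argue the reverse inclusion directly, while the paper phrases it as a contradiction.
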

The proof relies on showing that the feasible sets of Eq.~\eqref{eq:projection_QP-continuous} and Eq.~\eqref{eq:projection_QP-exponential} are equivalent; see Appendix~\ref{appx:proofs_general}.
While Eq.~\eqref{eq:projection_QP-exponential} is now admissible
for convex solvers, the exponential size of its constraint sets makes
its computation (and representation) prohibitive.
We next show that it is nonetheless possible to solve it
efficiently by providing an appropriate separation oracle,
which enables polynomial runtime in theory
and linear (and often less) in practice.

\paragraph{Projection by separation.}
A \emph{separation oracle} is a subroutine within a convex solver that
for a given non-feasible point $x$ finds a hyperplane that separates
it from the feasible set.
A known result is that the ellipsoid method coupled with an efficient separation oracle
is guaranteed to find the optimal solution in polytime
even if the number of constraints is exponential \citep{grotschel2012geometric}.
Our next result identifies such an oracle for Eq.~\eqref{eq:projection_QP-exponential}.
\begin{lemma}
\label{lem:worst_w_by_orthant}
Let $\Omega$ as in Eq.~\eqref{eq:exponential_constraint_set}, and consider some non-feasible $x$.
Define $w^*$ with
$w^*_i = \wmax_i$ if $x_i<0$, and $\wmin_i$ otherwise.
Then $w^* \in \Omega$ and separates $x$ from the feasible
set defined by $\Omega$ and $b$.
\squeeze
\end{lemma}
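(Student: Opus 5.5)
The plan is to show that $w^*$ minimizes the linear function $w \mapsto w^\top x + b$ over the whole box $[\wmin,\wmax]$. Once we have that, infeasibility of $x$ makes the constraint indexed by $w^*$ violated, and that constraint is the separating hyperplane.

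Membership $w^*\in\Omega$ is immediate, since each coordinate $w^*_i$ is by construction either $\wmin_i$ or $\wmax_i$. For the minimization claim, I write $w^\top x + b = \sum_i w_i x_i + b$. This is separable across coordinates, so minimizing over the box $\prod_i[\wmin_i,\wmax_i]$ decomposes into $d$ independent one-dimensional problems $\min_{w_i\in[\wmin_i,\wmax_i]} w_i x_i$:
\begin{itemize}
\item if $x_i<0$, the term $w_i x_i$ is decreasing in $w_i$, so the minimum is at $\wmax_i$;
\item if $x_i\ge 0$, it is non-decreasing in $w_i$, so $\wmin_i$ is a minimizer (ties at $x_i=0$ are harmless).
\end{itemize}
Hence $w^{*\top}x + b = \min_{w\in[\wmin,\wmax]} w^\top x + b \le \min_{w\in\Omega} w^\top x + b$. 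Since $w^*\in\Omega$, the last inequality is in fact an equality.

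Next I use non-feasibility. Here $x$ violates some constraint $w^\top x + b\ge 0$ with $w\in\Omega$, or equivalently, by Lemma~\ref{lem:equivalent_proj}'s feasible-set equivalence, with $w$ in the box. Either way, $\min_{w\in\Omega} w^\top x + b<0$, and therefore $w^{*\top}x+b<0$.

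Finally I exhibit the separating hyperplane $\{z: w^{*\top}z + b = 0\}$. Every feasible $z$ satisfies all constraints indexed by $\Omega$, in particular $w^{*\top}z+b\ge 0$, while $w^{*\top}x+b<0$. So this hyperplane separates $x$ from the feasible set. The oracle runs in $O(d)$ time, since computing $w^*$ needs only the signs of the coordinates of $x$.

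I expect no step to be a real obstacle. The only care needed is with ties at $x_i=0$, where either endpoint attains the minimum, and with keeping clear that separation only requires $w^*$ to be some violated constraint, not necessarily the most violated one (though here it is the most violated).
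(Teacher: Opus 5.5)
Your proof is correct and follows the same route as the paper. Choosing, coordinate by coordinate, the endpoint that minimizes $w_i x_i$ shows that $w^*$ attains $\min_{w\in\Omega} w^\top x + b$; infeasibility makes this value negative, so the constraint indexed by $w^*$ is violated by $x$ and satisfied by every feasible point. Your version is slightly cleaner than the paper's write-up: it minimizes over the whole box $[\wmin,\wmax]$ and handles the case $x_i \ge 0$ (including ties at $x_i=0$) explicitly, whereas the paper states that case with a typo.
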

The resulting separating hyperplane is tight,
i.e., corresponds to one of the (exponentially many) constraints.
It can be determined efficiently via closed form solution,
this by exploiting the fact that the worst $w \in \Omega$ for $x$
is determined by the orthant of $x$. Proof in Appendix \ref{appx:proof_sep_oracle}.

\paragraph{Cutting plane algorithm.}
The ellipsoid method is theoretically efficient,
but in practice can be cumbersome and slow.
The common solution is to instead use a cutting plane approach
which sequentially adds constraints to an `active' constraint set,
using the separation oracle,
until an optimal solution is found.
Pseudocode is given in Algo.~\ref{algo:projection}.

\begin{theorem}
\label{thm:cutting_plane_correctness}
Algorithm \ref{algo:projection} solves Eq.~\eqref{eq:projection_QP-exponential}.
\end{theorem}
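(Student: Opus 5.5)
The proof combines three facts about Algorithm~\ref{algo:projection}: every solution it returns is feasible, every returned point is optimal because each iterate solves a relaxation, and the loop must stop because $\Omega$ is finite. I will use the structure I expect from the description. The algorithm keeps an active set $\mathcal{A}\subseteq\Omega$, possibly initialized empty or with a single extreme vector. At each iteration it solves the relaxed program, namely Eq.~\eqref{eq:projection_QP-exponential} with $\Omega$ replaced by $\mathcal{A}$, and gets an iterate $x^{(t)}$. It then calls the separation oracle of Lemma~\ref{lem:worst_w_by_orthant}, which uses the sign pattern of $x^{(t)}$. If the returned constraint is satisfied the algorithm outputs $x^{(t)}$; otherwise it adds $w^*$ to $\mathcal{A}$ and repeats.

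\textbf{Step 1 (the oracle certifies feasibility).} This step is slightly stronger than Lemma~\ref{lem:worst_w_by_orthant} as stated, which only addresses non-feasible points. I will show that $w^*$ is the \emph{most violated} constraint for any $x$. The key identity is
\[
\min_{w\in[\wmin,\wmax]} w^\top x+b=\sum_i \min(\wmin_i x_i,\wmax_i x_i)+b={w^*}^\top x+b .
\]
It holds because the objective is separable and linear in each $w_i$. Each coordinate minimum is therefore attained at $\wmax_i$ when $x_i<0$ and at $\wmin_i$ when $x_i\ge 0$. Since $\Omega\subseteq[\wmin,\wmax]$ and $w^*\in\Omega$, we get ${w^*}^\top x+b\le w^\top x+b$ for all $w\in\Omega$. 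Hence the stopping test ${w^*}^\top x^{(t)}+b\ge 0$ holds if and only if $x^{(t)}$ is feasible for Eq.~\eqref{eq:projection_QP-exponential}. By Lemma~\ref{lem:equivalent_proj}, it is then also feasible for Eq.~\eqref{eq:projection_QP-continuous}.

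\textbf{Step 2 (optimality at termination).} Because $\mathcal{A}\subseteq\Omega$, the feasible set of each relaxed program contains the feasible set of Eq.~\eqref{eq:projection_QP-exponential}. Therefore $c(x,x^{(t)})\le \mathrm{OPT}$ at every iteration. At termination, Step 1 gives feasibility of $x^{(t)}$, so $c(x,x^{(t)})\ge\mathrm{OPT}$. Combining the two inequalities, $x^{(t)}$ attains the optimal value. For 2-norm or strictly convex costs the projection is unique, so it is \emph{the} solution.

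\textbf{Step 3 (termination).} If the algorithm does not stop at iteration $t$, the oracle returns $w^*$ with ${w^*}^\top x^{(t)}+b<0$. However, $x^{(t)}$ satisfies every constraint already in $\mathcal{A}$, so $w^*\notin\mathcal{A}$. The active set therefore strictly grows, and since $|\Omega|=2^d$ the loop runs at most $2^d$ iterations.

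The main obstacle is the gap between this finite (and practically pessimistic) iteration bound and the polynomial guarantee. For correctness the finite bound suffices, since the theorem claims only that the algorithm solves Eq.~\eqref{eq:projection_QP-exponential}. I would remark that polynomial time follows separately from the ellipsoid method with the same oracle \citep{grotschel2012geometric}. A secondary care point is that each relaxed program must have an optimal solution to work with. For $\mathcal{A}$ nonempty this holds because the feasible set is a nonempty closed polyhedron, or else the AoP is empty and infeasibility is detected. The objective $c(x,\cdot)$ is continuous and coercive, so a minimizer exists. I would state these as standing assumptions, namely $\aop(\amb)\neq\emptyset$ and $c$ a norm-type cost.
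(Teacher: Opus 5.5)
Your proof is correct and follows the paper's argument. Feasibility at termination comes from the oracle returning the worst-case (minimum-score) extreme vector for the current iterate, and optimality comes from each iterate minimizing the cost over a superset of the feasible region of Eq.~\eqref{eq:projection_QP-exponential}, which the paper phrases as a contradiction. Your explicit termination argument (the active set strictly grows, so there are at most $2^d$ rounds) and your remarks on existence and uniqueness are a slightly more careful version of what the paper leaves to the \texttt{UNTIL} $\mathcal{C}=\Omega$ guard.
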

Proof in Appendix \ref{appx:proof_projection_algo}.
The algorithm relies on the fact that the oracle finds
the `worst' $w$ for $x$, i.e., having the lowest score
of all $v \in \Omega$.
Thus, as the constraint set $\mathcal{C}$ grows linearly,
the number of constraints effectively accounted for
can potentially grow much faster.
The terminating condition ensures that once $\mathcal{C}$
accounts for all $v \in \Omega$,
the correct projection is found.
In the worst case, runtime can be exponential;
but in practice the number of iterations is linear in $d$,
and typically even smaller.
This was true both in our experiments in Sec.~\ref{sec:experiments}
and in a designated analysis (Appendix~\ref{appx:cutting_plane_constraint_set}).


Algo.~\ref{algo:projection} relies on the assumption that the AoP is not empty.
The following provides a necessary and sufficient condition:
\begin{lemma}
\label{lem:AOP_not_empty_iff}
$\aop \neq \varnothing$ iff 
$b\ge0$ or $\wmin_i \cdot \wmax_i >0$ for some $i \in [d]$.
\end{lemma}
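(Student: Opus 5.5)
The plan is to reduce nonemptiness of $\aop(\amb)$ to the sign of a worst-case score that separates across coordinates. By definition, $x \in \aop(\amb)$ iff $w^\top x + b \ge 0$ for every $w \in [\wmin,\wmax]$, i.e.\ iff $g(x) \triangleq \min_{w \in [\wmin,\wmax]} w^\top x + b \ge 0$. Since the box is a product of intervals and the objective is linear, the minimum splits coordinatewise:
\[
g(x) = b + \sum_{i=1}^d \min\{\wmin_i x_i,\, \wmax_i x_i\}.
\]
This is the same orthant-dependent worst case used in Lemma~\ref{lem:worst_w_by_orthant}. So $\aop \neq \varnothing$ iff $\sup_x g(x) \ge 0$ is attained, and I would prove both directions by direct inspection of $g$. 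Throughout I assume $\wmin_i \le \wmax_i$ for all $i$.

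\emph{Sufficiency.} If $b \ge 0$, then $x=0$ gives $g(0)=b\ge 0$, so $0 \in \aop$. Otherwise suppose $\wmin_i \wmax_i > 0$ for some $i$, so both endpoints have the same strict sign. If both are positive, take $x = t e_i$ with $t>0$. Then $g(x) = b + t\,\wmin_i$, which is nonnegative once $t \ge -b/\wmin_i$. If both are negative, take $x = -t e_i$. Then $g(x) = b + \min\{-t\wmin_i, -t\wmax_i\} = b - t\wmax_i = b + t|\wmax_i|$, which is again nonnegative for $t$ large. In either case we obtain an explicit point of the AoP.

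\emph{Necessity (contrapositive).} Suppose $b<0$ and $\wmin_i\wmax_i \le 0$ for all $i$. Together with $\wmin_i \le \wmax_i$, this gives $\wmin_i \le 0 \le \wmax_i$, so $0$ lies in every coordinate interval. Then each summand $\min\{\wmin_i x_i, \wmax_i x_i\} \le 0 \cdot x_i = 0$. Hence $g(x) \le b < 0$ for every $x$, so the AoP is empty. Equivalently, $w=0$ belongs to the box, and the constraint $0^\top x + b \ge 0$ is infeasible.

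There is no real obstacle. The only steps needing care are the coordinatewise decomposition of the minimum over the box and the translation of $\wmin_i\wmax_i\le 0$ into $0\in[\wmin_i,\wmax_i]$, which relies on $\wmin \le \wmax$. The boundary cases where some endpoint equals zero fall into the $\le 0$ branch and are handled by that same argument.
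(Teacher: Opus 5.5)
Your proof is correct and follows essentially the same route as the paper: sufficiency via the explicit witnesses $x=0$ or a scaled $e_i$, and necessity by showing that when $b<0$ and no coordinate has same-sign endpoints, the orthant-dependent worst-case $w$ gives a score of at most $b<0$ at every $x$ (equivalently, $w=0\in[\wmin,\wmax]$). Your version is slightly tighter than the paper's in two places. You correctly derive $\wmin_i\le 0\le\wmax_i$, whereas the paper writes $\wmin_i<0$ and so overlooks zero endpoints. You also treat the negative-sign case explicitly, which the paper covers only by ``w.l.o.g.''
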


We use this lemma to form constraints for our method in Sec.~\ref{sec:method}.
If the AoP is empty, then $\br_x^\varnothing(x)=x$, i.e.,
points do not move.
\squeeze


\begin{algorithm}[t!]
\caption{\texttt{ProjectContinuousAoP}}
\label{algo:projection}
\begin{algorithmic}[1]  
    \STATE \textbf{Input:} example $x$, parameters $\wmin, \wmax$, $b$
    \STATE \textbf{initialize} constraint set $\mathcal{C} \gets \varnothing$,
    projection $\xproj \gets x$
    \REPEAT
        \STATE define $w$ with entries $w_i \gets \wmin_i$ if $\xproj_i \ge 0$
        else $\wmax_i \,\,\, \forall i$
        \STATE  \textbf{if} $w^\top \xproj + b \geq 0$ \textbf{then} break \algocmnt{correct $\xproj$ found} \label{line:alg_return}
        \STATE $\mathcal{C} \gets \mathcal{C} \cup \{ w \}$
        \STATE $\xproj{\,\gets\,}\argmin_{x'} c(x,x')
        \,\,\text{s.t.}\,\,
        v^\top x' + b \geq 0 \,\, \forall v \in \mathcal{C}$
    \UNTIL{$\mathcal{C} = \Omega$ \algocmnt{worst case; typically ${\sim}d$ steps}}
    \STATE \textbf{return} $\xproj$
    as projection of $x$ onto
    $\aop(\amb_{\wmin,\wmax,b})$
\end{algorithmic}
\end{algorithm}


\section{Learning approach} \label{sec:method}


We now turn to present our algorithm for strategic learning under ambiguity.
Our general approach will be to devise an appropriate differentiable proxy loss
that accounts for strategic user behavior, and optimize a corresponding empirical surrogate objective.
Formally, given 
a model class $H$,
regulatory constraints on ambiguity $\Amb$,
and a sample set $\smplst=\{(x_i,y_i)\}_{i=1}^m$,
our objective will be of the form:
\begin{equation}
\label{eq:empirical_objective}
\argmin_{\amb \in \Amb, h \in \amb} \frac{1}{m}\sum_{i=1}^m
\loss(x,y;h,\amb) + \lambda \reg(w)
\end{equation}
where $\loss$ is the loss function,
$\reg$ is a regularization term,
and $\lambda$ is its coefficient.
We will make the assumption that $\Amb$ requires
all possible classifiers to be `reasonable' alternatives to the true one
by constraining weak agreement in the form of
$w^\top v >0$ for all $v \in \amb$.
This prevents adding to $\amb$ classifiers that contradict
the true $h$, ensuring that moving in response to any other $h'$
also increases the score w.r.t. $h$,
which we view as normatively desirable.
Weak agreement will also be technically useful for our approach,
which leverages it for correctness.
In practice we enforce agreement as a soft constraint via an additional penalty term.

While the main challenge in solving Eq.~\eqref{eq:empirical_objective}
lies in optimizing $h$ and $\amb \ni h$ 
jointly under constraints $\Amb$,
an additional modeling challenge is that 
strategic learning often requires a specialized
loss function suitably tailored to the task at hand.
To see why, consider the \naive\ approach of
applying a standard loss function $\loss$
(e.g., hinge loss or log-loss)
to strategic inputs as $\loss(\br_h(x),y;h)$.
Because points respond to the classifier by moving onto its decision boundary,
any point that moves
will obtain a score of zero, and a loss of one.
This will remain to hold even if $w,b$ are perturbed,
since points will just re-adapt to how the classifier has changed. 
As a result, 
gradients will not carry any information regarding these points,
and training will be ineffective.
Moreover, this approach requires differentiating through the argmax best response mapping $\br$, which can be challenging.



\paragraph{The strategic hinge.}
A common choice of loss function for standard strategic classification
is the \emph{generalized strategic hinge} \citep{levanon2022generalized},
which gives a generic formula for implementing strategic losses in diverse settings.
For the special case of linear classifiers and 2-norm costs,
the strategic hinge loss takes the following simple form:
\squeeze
\begin{equation}
\label{eq:shinge_standard}
\shinge(x,y;h) = \max \{0, 1-y(w^\top x + b + \blue{\frac{2}{\alpha}}\|w\|_2) \}
\end{equation}
which is differentiable (though not necessarily convex),
and does not include $\br$ explicitly.
Note that 
this uses $y \in \{\pm1\}$ rather than $\{0,1\}$.
Unfortunately,
and while applicable in principle,
the instantiation of the generalized strategic hinge in our setting
turns out to be intractable,
as it results in a doubly-nested objective
with non-convex (and even discontinuous) constraints.
As an alternative, our approach builds on
Eq.~\eqref{eq:shinge_standard}
and modifies it to support ambiguity.
\squeeze



\squeeze




\subsection{The ambiguous strategic hinge}
Our proposed loss function relies on the fact
that while ambiguity affects user behavior,
the classifier itself remains linear.
This allows us to maintain the general form of Eq.~\eqref{eq:shinge_standard}
and express ambiguity only through how margins are measured.
We begin with the standard notion of strategic margins,
and then show how to extend this to account for ambiguity.

\paragraph{Strategic margins.}
One interpretation of Eq.~\eqref{eq:shinge_standard}
is that,
relative to the non-strategic hinge,
each point $x$ is provided an additional `reward' of
$r=\blue{(2/\alpha)} \|w\|_2$ score points,
given by the maximal distance any point can move (\blue{$2/\alpha$})
converted to margin units ($\|w\|_2$).
This induces an `effective' decision boundary,
namely the hyperplane $w^\top x + b + r =0$,
which partitions the raw input space $\X$
into points that \emph{will} receive positive vs. negative
predictions \emph{after} (possible) strategic movement.
Margins are then measured relative to this effective decision boundary,
and $h$ is penalized in the loss accordingly.
Note Eq.~\eqref{eq:shinge_standard} differs from the conventional non-strategic
hinge only in this reward term, which fully accounts for strategic behavior.


\paragraph{Ambiguity-aware margins.}
Maintaining a similar perspective, our approach will be to modify the reward
in Eq.~\eqref{eq:shinge_standard}
to account for both strategic behavior \emph{and} ambiguity.
The first step is to notice that under ambiguity,
the effective decision boundary may no longer be linear
(see Fig.~\ref{fig:illust}).
This is since points that move must now reach the AoP,
which in itself is non-linear.
The implication is that two points $x,x'$ which are equidistant from $h$
may require a different reward if their distance to the AoP is different.
Thus, the reward must now be an input-depend function of the ambiguity set,
denoted $r_x(\amb)$.
Our proposed \emph{ambiguous strategic hinge} is:
\begin{equation}
\label{eq:shinge_ambiguous}
\ahinge(x,y;h,\amb) = \max\{0,1 - y(w^\top x +b + r_x(\amb) \}
\end{equation}
Note Eq.~\eqref{eq:shinge_ambiguous}  differs from Eq.~\eqref{eq:shinge_standard}
only in the reward $r_x(\amb)$.
Note also that $r_x(\amb)$ is the only component of the loss that is affected by ambiguity,
here through its dependence on $\amb$.

\begin{figure}[t!]
  \centering
  \includegraphics[width=\columnwidth]{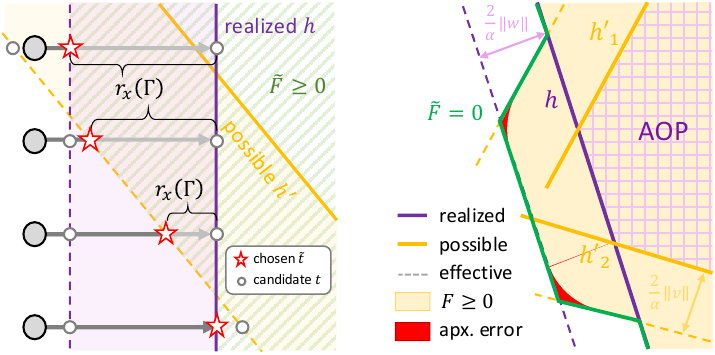}
   \caption{%
   \textbf{Left:}
   The ambiguous strategic hinge loss (Eq.~\eqref{eq:shinge_ambiguous})
   reduces prediction penalties by administering `rewards' $r_x(\amb)$
   based on directional distances to the effective decision boundary ($\Ftilde=0$).
   \textbf{Right:}
   An outer polyhedral approximation of the decision boundary,
   enabling a closed-form solution at 
   often minimal distortion.
   }
  \label{fig:a-hinge}
\end{figure}

The reward $r_x(\amb)$ should express the distance, in score units,
from the point $x$ to the effective decision boundary,
which now depends on $\amb$.
Let $F$ be a function characterizing this boundary
as $F(x)=0$, with $F(x) \ge 0$ determining the region of positive predictions.
For now we will keep $F$ abstract, but will return to discuss its structure shortly,
though generally it will be non-linear \citep{trachtenberg2025strategic}.
Since points are ultimately classified by $h$,
it is natural to measure margins towards it.
Hence, for a given example $x$, let
$z^*$ be the closest point in the direction of $h$ that is on the decision boundary defined by $F$,
namely:
\squeeze
\begin{equation}
\label{eq:margin}
(z^*,t^*) = \argmin_{z \in \R^d, t \in \R} t
\,\,\text{ s.t. }\,\,
z=x+tw, \,\,
F(z) = 0
\end{equation}
and noting $z^*=x+t^* w$.
The reward for $x$ is then given by:
\begin{equation}
\label{eq:reward_amb}
r_x(\amb) = \max\{0, -w^\top(z^*) + b\}
\end{equation}
where max ensures only points with $h(x)=0$ are rewarded.
Note that Eq.~\eqref{eq:shinge_ambiguous} recovers Eq.~\eqref{eq:shinge_standard}
since for the special case of $\amb=\{h\}$,
Eq.~\eqref{eq:reward_amb} gives
$r_x(\amb)=\blue{\frac{2}{\alpha}}\|w\|_2$.
This idea is illustrated in Fig.~\ref{fig:a-hinge} (left).
For larger ambiguous sets,
the reward can only decrease.
Under full ambiguity $r_x=0$, recovering the non-strategic hinge.


\paragraph{Optimization.}
Although $\ahinge$ admits a simple form, 
it requires solving Eq.~\eqref{eq:margin},
which entails two main challenges.
First,
since the effective decision boundary may not be linear,
the constraint $F(x)=0$ can be intractable.
Second,
$z^*$ is the solution to an argmin problem, and so is non-differentiable.
Our solution is to replace $F$ with a simple yet tight approximation that overcomes both issues.

Consider a classifier $h$ and ambiguity set $\amb \ni h$.
A given point $x$ will receive a positive prediction if either:
(i) $h(x)=1$ initially, hence $x$ will not need to move,
or (ii) $h(x)=0$, but the distance from $x$ to $\aop(\amb)$ is at most \blue{$2/\alpha$}, and so $x$ can move, and will.
Since the effective positive region is 
the union of the above two conditions,
$F$ is given by a union of a halfspace,
and a convex set
defined by all points at distance at most \blue{$2/\alpha$} from the AoP.
This makes it difficult to work with.
However, we can simplify $F$ considerably without much distortion by replacing it with its
outer polyhedral approximation, denoted $\Ftilde$.
This is obtained by shifting all $h' \in \amb$ by $(2/\alpha) \|v\|_2$
(to account for strategic behavior),
 taking their intersection (to account for ambiguity),
and then taking the union with $h$.
See illustration in Fig.~\ref{fig:a-hinge} (right).

The resulting decision boundary admits an explicit form:
\begin{equation}
\label{eq:Tilde_f}
\Ftilde(x) = \min\{ w^\top x + b,
\max_{v,a \in \amb} v^\top x + a + 
\blue{\frac{2}{\alpha}}\|v\|_2
\}
\end{equation}
For PD costs of the form $\phi(\|A^{1/2}(x-x')\|_p)$,
the 2-norm becomes
$\phi^{-1}(\frac{2}{\alpha})\|A^{-1/2}w\|_*$,
where $\|\cdot\|_*$ is the dual norm.
Under weak agreement, namely $w^\top v \geq 0$ 
for all $v \in \amb$,
Eq.~\eqref{eq:margin} with $\Ftilde$ in place of $F$ can be obtained in closed-form:
\begin{equation}
\label{eq:Tilde_t}
\ttilde = \min \left\{
t_{w,a},
\max_{v,a \in \amb} \left\{
t_{v,b}
\right\}
\right\}
\end{equation}
Here $t_{w,b}$ and $t_{v,a}$ are `candidate' margins given by:
\begin{equation}
\label{eq:candidate_margins}
t_{w,b} = -\frac{w^\top x +b}{w^\top w}, \quad
t_{v,a} = -\frac{v^\top x + a + \blue{\frac{2}{\alpha}}\|v\|_2}{v^\top w} 
\end{equation}
where the $\max$ and $\min$ implement the union and intersection
operators defining $\Ftilde$, as described above.
Intuitively, $\ttilde=t_{v,a}$ if some shifted possible $h_{v,a}$ is closer to $x$ (in the direction of $w$)
than the unshifted $h_{w,a}$, and otherwise $\ttilde=t_{w,b}$.
In Fig.~\ref{fig:a-hinge} (left), the top three points correspond to the first case,
and the bottom point corresponds to the second.
\squeeze

The approximate reward can then be simplified to:
\begin{equation}
\label{eq:reward_amb_apx}
\rtilde_x(\amb) = \max\{0, -w^\top(\ztilde) + b\},
\quad
\ztilde = x+\ttilde w
\end{equation}
and the final ambiguous strategic loss is obtained by plugging $\rtilde$ from Eq.~\eqref{eq:reward_amb_apx}
into Eq.~\eqref{eq:shinge_ambiguous} as a substitute for $r$:
\begin{equation}
\label{eq:shinge_ambiguous_apx}
\ahingeapx(x,y;h,\amb) = \max\{0,1 - y(w^\top x +b + \rtilde_x(\amb) \}
\end{equation}
Since all components of $\ahingeapx$ are differentiable,
the entire loss permits efficient gradient computation.
In practice we found it useful to replace the hard max and min in Eq.~\eqref{eq:Tilde_t}
in the formula for $\ttilde$ with an expectation over softmax and softmin probabilities, respectively.
This allows gradients to flow through all $h' \in \amb$
for every point $x$,
rather than just the single $h'$ whose shifted boundary is closest.
\blue{For continuous ambiguity we take the (soft)max over the set of classifiers encountered while running Algo.~\eqref{algo:projection}.
Since all classifiers $h_{v,a}$ have shared parameterization,
typically all entries of the variables $\wmin,\wmax$ are updated at each step}.




\paragraph{Approximation.}
As Fig.~\ref{fig:a-hinge} (right) shows,
$\Ftilde$ approximates $F$ well expect for possible distortion near its corners.
In Appendix~\ref{appx:loss_func_analysis}
we analyze how this effects the quality of loss approximation.
Intuitively, the loss is exact, namely $\ahingeapx=\ahinge$, whenever the directional projection falls on a face of $F$.
When it falls on a `rounded' corner, 
the approximation depends on the angle of the corresponding intersecting classifiers.
For discrete ambiguity,
in the worst case the approximation reverts to,
and is therefore as good as, 
the non-ambiguous strategic hinge $\shinge$ (Eq.~\eqref{eq:shinge_standard}).
For continuous ambiguity, this holds whenever the true
$h$ is included in the set over which the softmax is taken.



\extended{
\todo{make sure the following is true}
\paragraph{The continuous case.}
Computing $\ttilde$ in Eq.~\eqref{eq:reward_closed_form} for discrete ambiguous sets is straightforward.
However, for the continuous case, the max need be taken over a continuous parameter range $[\wmin,\wmax]$, which is intractable.
Even if we reduce the range to the set $\Omega$, this is still exponential in $d$.
Fortunately, our next result shows that the argmax lies in a small and easily-defined subset of possible classifiers.
\begin{lemma}
\label{lem:argmax_in_algo}
\blue{Let $A={v_1,\dots,v_k}$ be the weights that were tight when running Algo.~\toref\ on some input $x$.
\todo{...}
Then one of these is the argmax in $\ttilde$.}
\end{lemma}
\todonir
\todo{restate claim; prove (if true), otherwise give as heuristic}
Thus, it suffices to include only these classifiers in the max of
Eq.~\eqref{eq:reward_closed_form}.
These can be extracted when computing best responses using
Algo.\ref{algo:projection}
\footnote{This holds also for the discrete case, but does not offer any computational benefit.}
Here as well we found it useful to use a  softmax,
this time over an extended set that includes all classifiers
encountered in any step of the algorithm.
\blue{Note that in this case gradients are tied since all
classifiers in $\amb$ are commonly parametrized by $\wmin$ and $\wmax$.}
}






\section{Experiments} \label{sec:experiments}
In this section we demonstrate our approach for learning under ambiguity.
We begin with synthetic data, and then proceed to a semi-synthetic setting using real data and simulated user behavior.
For full experimental details see Appendix~\ref{appx:exp_details}.
All code is publicly available at \url{https://github.com/BML-Technion/AmbSC}.

\extended{\todo{add git repo url}}

\begin{figure*}[t!]
  \centering
  \includegraphics[width=0.24\textwidth]{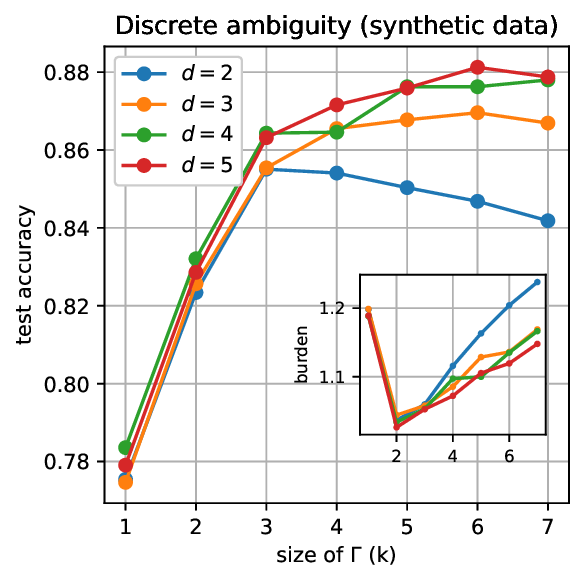}
  \,
  \includegraphics[width=0.375\textwidth]{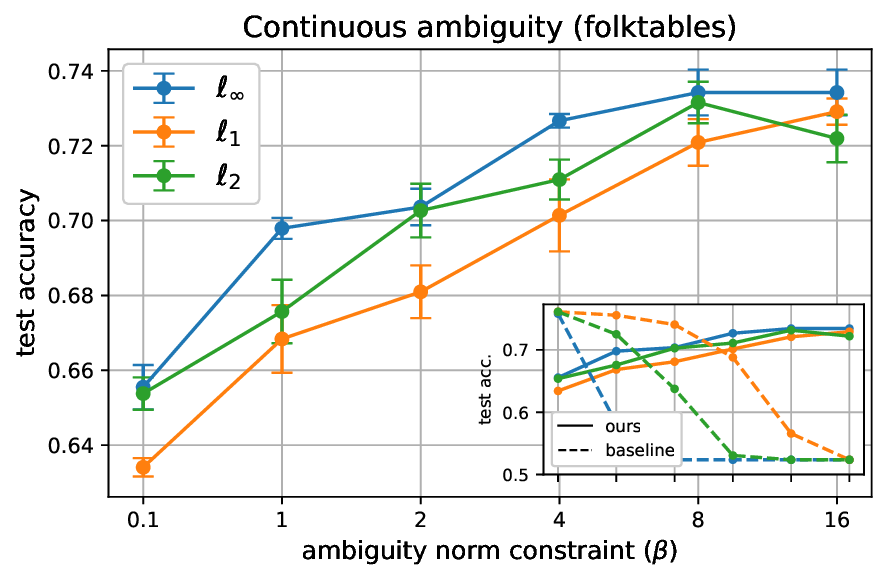}
  \,
  \includegraphics[width=0.24\textwidth]{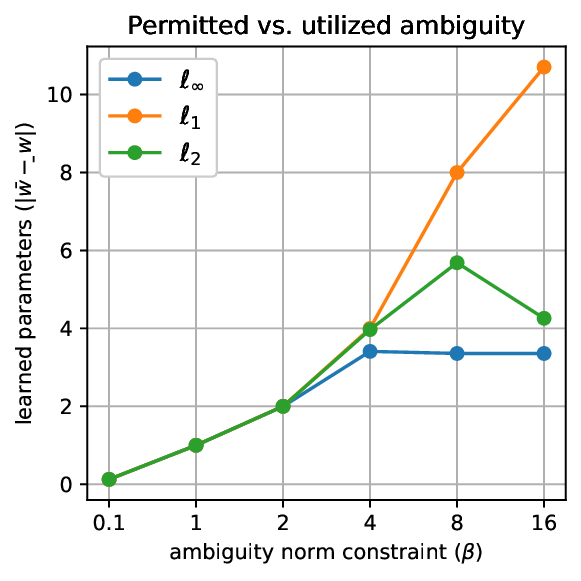}
  \caption{
  Accuracy of our approach generally increases with the degree of permitted ambiguity,
  both in the discrete case with $k$ \textbf{(left)} and in the continuous case with $\beta$ \textbf{(center)}.
  Optimal accuracy is typically attained at the correct,
  rather than maximal, amount of ambiguity \textbf{(right)}.
  \squeeze
  }
  \label{fig:exps}
\end{figure*}

\subsection{Synthetic data} \label{sec:exp_synth}

\paragraph{Separable case.}
We begin with a simple setup to demonstrate
how ambiguity can be leveraged for improving learning outcomes.
We focus on discrete ambiguity,
and generate data according to 
Fig.\ref{fig:illust} \blue{(B)}.
Here points $x$ are sampled uniformly
from \blue{$[-1,1]^2$} and
labels are $y=\one{\min\{f_0(x),\max{f_1(x),f_2(x)}\} \ge 0}$
for linear $f_i$ as in the illustration.
To ensure a positive margin we discard points near the decision boundary.
Training with a \naive\ (i.e., non-strategic) loss
recovers $f_0$, which attains 62.9\% strategic accuracy.
Training with the standard strategic hinge loss
similarly reaches 88.4\% strategic accuracy,
this by learning a shifted $f_0$.
This is equivalent to our approach with $k=1$.
However, with $k=2$, accuracy improves to 94.5\%,
and with $k=3$ it reaches 99.2\%.
Here learning recovers $f_0$ as the classifier,
and adds shifted $f_1,f_2$ to the ambiguity set.
This is illustrated in Fig.~\ref{fig:separable_data_results}
in Appendix~\ref{appx:separable_case}.
As for social outcomes, notice that the non-strategic classifier
errs on negative inputs, giving rise to gaming behavior.
Contrarily, the standard strategic classifier 
errs only on positive inputs, which results in high social burden.
This demonstrates how two classifiers with matching (subtopimal)
strategic accuracy
can induce very different social outcomes.
Our approach with $k=3$ mitigates both types of negative outcomes,
and achieves maximal accuracy with less social burden or gaming behavior.
Notice that the strategic baseline portrays a no ambiguity setting,
whereas the non-strategic baselines (on non-strategic data) portrays full ambiguity.
Results show that neither extreme is desirable.




\paragraph{Gaussian data.}
Next we consider the classic example of class-conditional multivariate Guassian data,
$p(x|y) = \N(\bm{\mu}_y,\bm{\Sigma}_y)$.
Given $d$, we set $\mu_+=(1,0,...,0)$, $\Sigma_+ = \mathrm{diag}(0.5,0.3,...,0.3)$, $\mu_-=(-0.5,0,...,0)$, $\Sigma_-= \mathrm{diag}(2,8,2,...,2)$
$p(y)=1/2$.
Note the larger variance for $p(x|y=0)$
is designed to amplify the potential effect of ambiguity.
Fig.~\ref{fig:exps} \blue{(left)}
shows strategic accuracy for increasing $k$ and varying $d$.
Here the standard strategic baseline corresponds to $k=1$.
As $k$ grows, our approach exhibits increasing accuracy gains,
suggesting it utilizes its capacity for ambiguity effectively.
The inset graph shows that social burden first decreases with $k$,
and then moderately increases.
This shows that the social optimum is attained at some mid-degree of ambiguity, for which regulation can aim.
\squeeze




\subsection{Real data} \label{sec:exp_real}
We use the \dataset{folktables} dataset
\extended{\footnote{\url{https://github.com/socialfoundations/folktables}}}%
\citep{ding2021retiring}
and in particular the \dataset{ACSEmployment} task.
Here we focus on continuous ambiguity,
and examine the effects of regulation by varying its degree and type.
This is achieved by
adding to the objective a hard constraint on ambiguity of the form
$\|\wmax-\wmin\|_q \le \beta$
for $q \in \{1,2,\infty\}$,
where larger $\beta$ permits more ambiguity (or less transparency)
for learned models.
Here we compare our approach to a baseline that
first learns using the standard strategic hinge,
and then sets $\wmin,\wmax$ by taking the largest intervals around $w$
satisfying the norm constraint.
That is, the baseline trains a strategic model assuming full transparency,
and then adds ambiguity post-hoc.
\squeeze


Fig.~\ref{fig:exps} \blue{(center)}
shows accuracy for increasing values of $\beta$ and across norm types $q$.
When $\beta$ is very low,
our approach does not improve over the post-hoc baseline,
which is tailored for transparency ($\beta=0$; see inset).
This is likely due to the increased complexity of our objective,
which is harder to optimize.
However,
as $\beta$ increases and more ambiguity is allowed,
our approach utilizes this freedom to learn
increasingly accurate classifiers and matching ambiguity sets.
Meanwhile, baseline performance deteriorates and quickly reaches
50\% accuracy, and is highly sensitive to norm type.
This highlights a potential tradeoff for the system
between accuracy and information control,
which applies whenever
(partial) opacity is desired or required.
Note how
for our method accuracy plateaus 
at interim $\beta$ values
(which differ per norm).
Fig.~\ref{fig:exps} \blue{(right)}
shows that at these points
the effective $\|\wmax-\wmin\|$ of the learned model
plateaus, meaning that the constraint w.r.t. $\beta$ is loose.
Together,
these again suggest that a `correct' amount of ambiguity is optimal.
Social burden is generally much lower for our method,
but its dependence on $\beta$ is more nuanced;
see detailed analysis in Appendix~\ref{appx:real_acc_vs_burden}.
\squeeze

\extended{
\todo{ref to per-norm weights analysis in appendix}

\todo{result on burden to AoP vs to classifier?}
}



\section{Discussion} \label{sec:discussion}
Most decisions inherently involve uncertainty;
this should hold also for decisions made by users
in response to learned models.
Our work studies ambiguity as a possible form of such uncertainty,
one that is likely to arise when systems have control over what information is released.
Our results suggest that,
perhaps surprisingly,
full opacity is not always in the system's best interest.
This motivates a hybrid 
approach to regulating transparency
that sets limits on the degree of permitted ambiguity, 
but permits flexibility within this scope.
One question is whether this idea can be applied to settings
where learned models are complex,
and systems seek to share simpler or interpretable proxy models.
Another question is whether
competition between systems over users in a market
enhances ambiguity,
or rather,
encourages transparency.
We leave these questions and others to future work.
\squeeze



\section*{Acknowledgements} \label{sec:ack}

The authors would like to thank Inbal Talgam-Cohen for her dilligent advice and insightful feedback. 
This work is supported by the Israel Science Foundation
grant no. 278/22.

\section*{Impact statement} \label{sec:broader}

Our work aims to study the effect of ambiguity on learning and social outcomes through its impact on user behavior.
This is motivated by the need to better understand the role and importance of transparency in strategic learning settings.
Transparency in machine learning
is typically considered as a normative requirement.
Instead, we seek to understand whether and when transparency
will emerge organically through strategic interactions,
driven by endogenous incentives,
and in a way that benefits all parties involved.
Towards this, our approach extends the conventional framework of strategic classification to account for ambiguity.
Our revised framework allows the learner control over ambiguity,
but within the constraints imposed exogenously by a regulator.
This highlights the possible interaction that can arise between
systems, their users, and the broader environments in which they operate.

At the same time, it is important to remember that our results
rely on the standard assumptions of strategic classification,
which captures the tension that can arise in such settings,
albeit in a simplified and even stylized manner.
Using our proposed learning approach in practice must be done with care,
in consideration of the assumptions it was developed under,
and with responsibility regarding its potential social impact once deployed.
Similarly, our conclusions regarding regulation should be interpreted within this scope.
Our hope is that our work stimulates discussion regarding the
benefits of emergent transparency, 
the need for balancing between a system private information
and its users' access to it,
the possibilities of aligning incentives via bounded ambiguity,
and the role of machine learning in this process.


\bibliography{refs}
\bibliographystyle{icml2026}

\newpage
\appendix
\onecolumn
\section{Proofs} \label{appx:proofs}

\subsection{General lemmas} \label{appx:proofs_general}


\begin{proof} [\textbf{Proof of Lemma \ref{lem:offset_amb}}]
    In the offset ambiguity case, the strategic response $\Delta$ is defined as:
    \begin{equation}
    \label{eq:offset_amb_eq}
        \Delta_{w,b}^{\Gamma}(x) = \Delta_{w,b}^{\{\bmin, \bmax\}}(x) = \argmax_{x'}\min_{b\in [\bmin,\bmax]}{h_{w,b}(x') - \alpha c(x,x')}
    \end{equation}
    In this case, 
    notice that for all $x$ the inner min is always achieved by $b=\bmin$,
    as this always induces the classifier that is furthest in the AoP
    (from any $x$).
    This means that Eq.~\eqref{eq:offset_amb_eq} can be rewritten as:
    \begin{equation}
    \label{eq:offset_amb_simple}
       \Delta_{w,b}^{\Gamma}(x) = \argmax_{x'}{h_{w,\bmin}(x') - \alpha c(x,x')}
    \end{equation}
    which recovers the standard strategic response under $h_{w,\bmin}$ and $\alpha$.
    To show equivalence,
    we next show that exists a cost scale $\alpha_\amb$ for which the following holds:
    \begin{equation}
        \label{eq:strategic_movement_equation}
        h_{w,b}(\Delta_{w, \bmin}(x;\alpha)) = h_{w,b}(\Delta_{w,b}(x;\alpha_\Gamma))
    \end{equation}
    I.e., that classifying users using $h_{w,b}$ under ambiguity (and cost scale $\alpha$) is equivalent to using $h_{w,b}$ with no ambiguity
    (and under cost scale $\alpha_\amb$).
    By Eq.~\eqref{eq:offset_amb_simple},
    $\alpha_\amb$ can be derived by comparing the distance between the decision boundaries of two effective classifiers:
    $w^\top x+b +\frac{2}{\alpha_\Gamma}\|w\|_2$, and the shifted $h_{w, \bmin}(x) = w^\top x+\bmin + \frac{2}{\alpha} \|w\|_2$.
    Equating these gives:
    $$\frac{2}{\alpha_\amb} \|w\|_2 + (b-\bmin) = \frac{2}{\alpha}\|w\|_2$$
    $$ \alpha_{\amb} = \frac{1}{1 - \frac{b-\bmin}{2\|w\|_2}\alpha} \alpha$$
    Note that $\alpha \le \alpha_\amb$, and is upper-bounded by:
    \[
    \alpha_\amb \leq \alpha
    (1 - \frac{\bmax - \bmin}{2\|w\|_2} \alpha)^{-1}  
    \]
    For cost of type $\phi(\|A^{1/2}(x-x')\|_p$ we need to modify the term $\frac{2}{\alpha} \|w\|$ with the term $u^*\|A^{-1/2}w\|_*$ where $u^*$ satisfies that its the highest $u$ such that $\phi(u^*) \leq \frac{2}{\alpha}$ and $\|\cdot\|_*$ is the dual norm of $\|\cdot\|_p$ (as presented in \citep{rosenfeld2024oneshot}).
\end{proof}

\begin{proof} [\textbf{Proof of Lemma \ref{lem:equivalent_proj}}]
    Let $\mathcal{S}_{cont}$ be the set of feasible solutions of Eq.~\eqref{eq:projection_QP-continuous}, and let $\mathcal{S}_{disc}$ be the set of feasible solutions of Eq.~\eqref{eq:projection_QP-exponential}, namely:
    \begin{align*}
    \mathcal{S}_{cont} &= \{x': \,\, w^\top x' + b \geq 0, \,\, \forall \, w\in [\wmin,\wmax] \} \\
    \mathcal{S}_{disc} &= \{x': \,\, w^\top x' + b \geq 0, \,\, \forall \, w\in \Omega \}
    \end{align*}
    We show that $\mathcal{S}_{cont} = \mathcal{S}_{disc}$.
    
    $\mathcal{S}_{cont} \subseteq \mathcal{S}_{disc} \,$:
    Since $\Omega \subseteq [\wmin, \wmax]$, every solution $x'$ in $\mathcal{S}_{cont}$ satisfies $\forall w \in \Omega \,\,, w^\top x'+b \geq 0$. Therefore, $x'$ is also a solution to $\mathcal{S}_{disc}$.\newline
    
    $\mathcal{S}_{disc} \subseteq \mathcal{S}_{cont} \,$:
    Let $x' \in \mathcal{S}_{disc}$. By definition, we know that $\forall w \in \Omega \,\,, w^\top x'+b \geq 0$, or essentially $\min_{w\in \Omega}w^\top x' + b \geq 0$. Suppose by contradiction that $x' \notin \mathcal{S}_{cont}$. That is, there exists $w \in [\wmax, \wmin]$ such that $w^\top x'+b < 0$. Based on this $w$, we will construct $w'$ which yields even lower score than $w$, such that $w' \in \Omega$. Recall that $w^\top x' = \Sigma_{i}{w_i x'_{i}}$. Then for each $i$:
    \begin{enumerate}
        \item If $x'_i < 0$, pick $w_{i}' = \wmax_i$. It holds that $w'_{i} \cdot x'_{i} \leq w_i\cdot x'_{i}$, since $w'_i \geq w_i$.
        \item If $x'_i \geq 0$, pick $w_{i}' = \wmin_i$. It holds that $w'_{i} \cdot x'_{i} \leq w_i\cdot x'_{i}$, since $w'_i \leq w_i$.
    \end{enumerate}
    Therefore, we get that $w'^\top x' + b < 0$ also. This is a contradiction, since $w'\in \Omega$ and hence $x' \notin S_{disc}$.

    Eq.~\eqref{eq:projection_QP-continuous} and Eq.~\eqref{eq:projection_QP-exponential} minimize the same objective.
    Hence, from the equivalence of their feasibility sets
    we get that they share the same optimal solutions.
\end{proof}

\begin{proof}[\textbf{Proof of Lemma \ref{lem:AOP_not_empty_iff}}]
    We prove both directions:
    \begin{itemize}
        \item \textbf{If the AoP is not empty:} If $b \geq 0$, then we finish. Otherwise, $b < 0$. Since the AoP is not empty, there exists a point $x$, such that $\forall w \in [\wmin, \wmax]$ it holds that $w^\top x + b \geq 0$. Note that $x$ is not the 0 vector since $b < 0$.
        
        Assume by contradiction that $\wmin, \wmax$ do not agree on any coordinate.
        That is, and since $\wmin \le \wmax$, it holds that $\wmin_i < 0, \forall i$ and $\wmax_j \geq 0, \forall j$. Now, we will construct a vector $\Tilde{w} \in [\wmin, \wmax]$ in the following way:
        For each $i\in [d]$, if $x_{i} \geq 0$, then $\Tilde{w}_{i} = \wmin_i$;
        else, $\Tilde{w}_{i} = \wmax_i$.
        Since $b < 0$, and there exists a coordinate $i$ such that $x_i \neq 0$, it holds that $\Tilde{w}^\top x + b < 0$. This is a contradiction since $x$ is in the AoP.

        \item \textbf{If $b\ge 0$ or $\wmin_i \cdot \wmax_i >0$ for some $i \in [d]$:} First if $b \geq 0$, a trivial solution is $x_i = 0, \forall i\in [d]$. Therefore, for each $w$ in $[\wmin, \wmax]$, we get $w^\top  x + b = b \geq 0$.
        Otherwise, we have $b < 0$. Denote by $i$ the coordinate that $\wmin_i \cdot \wmax_i > 0 $. This means that both $\wmin_i, \wmax_i$ have the same sign. We now show that there exist $x$ such that $x$ is in the AoP.
        Assume w.l.o.g that $0 \leq \wmin_i\leq \wmax_i$. We can take $x_j = 0, \forall j\neq i$ and in the $i^{\text{th}}$ coordinate we take $x_i$ such that $\wmin_i \cdot x_i  \geq -b$. For every $w \in [\wmin, \wmax]$ and for $b$, it holds that $w^\top  x + b \geq \wmin_i \cdot x_i + b \geq 0$. Therefore, the AoP is not empty.
    \end{itemize}
\end{proof}

\subsection{Separating oracle} \label{appx:proof_sep_oracle}

\begin{proof} [\textbf{Proof of Lemma \ref{lem:worst_w_by_orthant}}]
    The construction of $w^*$ is in the same way as in the proof of Lemma \ref{lem:equivalent_proj}.
    Let $x$ be a non-feasible solution for $\Omega$. We will construct the "worst" classifier $w^* \in \Omega$ that yields the lowest score $w{^*}^\top x + b$:
    \begin{enumerate}
        \item If $x_i < 0$, pick $w{^*}_{i}' = \wmax_i$. It holds that $w{^*}_{i}' \cdot x_i < 0$ and is smaller than $\wmin_i \cdot x_i$.
        \item If $x_i < 0$, pick $w{^*}_{i}' = \wmin_i$. It holds that $w{^*}_{i}' \cdot x_i < 0$ and is smaller than $\wmax_i \cdot x_i$.
    \end{enumerate}
    Therefore, $w{^*}^\top x + b = \argmin_{w\in \Omega} w^\top x + b$, and since $x$ is non feasible, $w{^*}^\top x + b < 0$. Hence, $w^*$ separates $x$ from the feasible set defined by $\Omega$ and $b$.
\end{proof}

\subsection{Projection algorithm (Algo.~\ref{algo:projection})} \label{appx:proof_projection_algo}

\begin{proof} [\textbf{Proof of Theorem \ref{thm:cutting_plane_correctness}}]
    First, if by the end of the algorithm 
    $\mathcal{C}=\Omega$ then
    correctness is trivial.
    We therefore focus on the case where 
    the algorithm terminates early 
    and the returned $\xproj$ is computed with $|\mathcal{C}| \subset \Omega$ that satisfies the condition in line 5.
    
    \textbf{Correctness:} Given $x'$, let $w' \in \Omega$ be the classifier that yields the worst score on $x'$.
    As per Lemma \ref{lem:worst_w_by_orthant}, all other classifiers yield higher scores than $w'$. Thus, if $w'^\top x' + b \geq 0$, then $\forall w \in [\wmin, \wmax]$, $w^\top x' + b \geq 0$, meaning that $x' \in \aop$.
    
    \textbf{Optimality:}
    Let $\mathcal{W}$ denote the set of extreme classifiers induced by $\wmin$ and $\wmax$.

    Suppose, for contradiction, that there exists $\Tilde{x}$ such that $c(x,x')> c(x, \Tilde{x})$ and $w^\top \Tilde{x} + b \geq 0$ for all $w \in \mathcal{W}$.

    Let $\mathcal{C}$ be the set of constraints used by Algorithm~\ref{algo:projection} to produce $x'$. We know that $\mathcal{C} \subseteq \mathcal{W}$, so $\Tilde{x}$ satisfies all constraints in $\mathcal{C}$.

    Therefore, in the iteration of the algorithm with the constraint set $\mathcal{C}$, both $x'$ and $\Tilde{x}$ are feasible solutions. the chosen solution is the one which minimizes the distance to $x$, and since $x'$ was returned, it holds that $c(x,x') \leq c(x,\Tilde{x})$. This is a contradiction.
\end{proof}



\section{Experimental details} \label{appx:exp_details}
\subsection{Synthetic Data} \label{appx:exp_details_synth}
\subsubsection{Separable Case} \label{appx:separable_case_data}
Data is generated as follows:
\begin{itemize}
    \item Inputs $x=(x_1,x_2)$ are sampled uniformly from a 2D rectangle,
    where $x_1 \in [-6,4]$, $x_2 \in [-10,10]$.
    \item Labels are assigned positive iff $x$ satisfies either:
    \begin{itemize}
        \item $x_1 \geq 1$, or
        \item $x_1 + x_2 + 2\geq -2$, $x_1 -1 \geq -2$ and $x_1-x_2 +2 \geq -2$
    \end{itemize}
    
    \item  Point near the decision boundary are removed to create a margin.
\end{itemize}

In each trial we sample 1000 examples, and randomly split the data 50:10:40 into train, validation, and test sets. 

Training for all methods was done using gradient descent with 500 epochs, learning rate $0.006$, and weight decay $0.001$.
For the non-strategic baseline we used the standard hinge loss.
For $k=1$ we used the standard strategic hinge loss (Eq.~\eqref{eq:shinge_standard}).
For $k>1$ we used our proposed ambiguous strategic hinge (Eq.~\eqref{eq:shinge_ambiguous}),
with small penalty term (coefficient $0.001$) that enforces agreement between the realized and all possible classifiers (see \ref{exp:regularization_in_descrete_ambiguity}).

\subsubsection{Gaussian Data}
We experimented with increasing dimensions $d\in [2,3,4,5]$ and ambiguity set size (in this case, the number of possible classifiers) 
$k = [1,2,3,4,5,6,7]$. Note that $k=1$ recovers the standard strategic classification setting. Given $d$, data is sampled from class-conditional multivariate Gaussians with the following parameters:
\begin{enumerate}
    \item For the positive class, $\mu_+ = [1,0,0,...,0]$ and $\Sigma_+=\mathrm{diag}(0.5,0.3,...,0.3)$
    \item For the negative class, $\mu_- = [-0.5,0,0,...,0]$ and $\Sigma_-=\mathrm{diag}(2,8,2,2,...,2)$
\end{enumerate}
The label prior was set to $P(y=1)=p(y=-1)=0.5$,
and the cost scale in $\br$ was set to $\alpha=1$. Results were averaged over twenty random instances of the data.

\subsubsection{Optimization}

\paragraph{Parameter initialization.}
In all experiments we initialized the weights of the true classifier $w$ by sampling each entry from a standard Gaussian distribution, and then normalizing as $\|w\|_2=1$.
All other classifiers $v \in \amb$ were initialized 
by taking $w$, adding Gaussian noise with small variance 0.8 per entry,
and normalizing.
All offset terms were initialized to $b=-1$.

\paragraph{Regularization.} 
\label{exp:regularization_in_descrete_ambiguity} 
In addition to $\ell_2$ regularization,
we add to the objective a term that penalizes the model
for disagreement between the true $w$ and any of the possible $v$,
measured as $\cos(w,v)$.



\paragraph{Optimization.}
We used Adam with stochastic mini batches. 
For the separable data experiment, we trained for 500 epochs, with  learning rate $0.006$, weight decay $0.001$, and agreement regularization coefficient $0.001$.
The temperature parameter for the softmax and softmin of the ambiguous strategic hinge loss was set to $0.15$.
For the Gaussian experiment, 
we used 750 epochs,
learning rate $0.006$, weight decay $0.02$,
agreement regularization $0.08$ and 
temperature $0.25$.

\subsection{Real Data} \label{appx:exp_details_real}

\subsubsection{Folktables}

\paragraph{Data description.}
For our main experiment we use a subset of the \dataset{folktables} dataset, which is publicly available at \url{https://github.com/socialfoundations/folktables.git}.
We focus on the employment prediction task, where the goal is 
to predict whether an individual is employed or not.
We considered data points from Alabama ('AL'),
which contains 47,777 examples.
To obtain a balanced dataset in terms of labels,
each instance of our experiment uses a random subset of 
9,000 examples for which $|\{y=1\}| \approx |\{y=-1\}|$.

\paragraph{Features and labels.}
We used the following features: 'AGEP'- age (integer), 'SCHL'- educational attainment (ordinal from 1 to 24), 'MAR'- marital status (ordinal from 1 to 5), 'RELP'- religious affiliation (ordinal from 1 to 17), 'ESP'- employment status of parents (ordinal from 1 to 8), 'CIT'- citizenship status (ordinal from 1 to 5) and 'MIL'- Military Service (ordinal from 0 to 4).
All features where scaled to $[0,1]$. 
Labeled were set by binarizing the attribute 'ESR' - employment status.

\paragraph{Data generation.}
For each experimental instance we sampled 9,000 random class-balanced example.
These were split 40:30:30 into train, validation, and test sets.
Results were averaged over \blue{five} random instances.
For strategic behavior, we set the cost scale to $\alpha = 6$ in all instances, chosen so that roughly between 10-40\% moved across all conditions.

\paragraph{Learning algorithms.}
Our approach uses the ambiguous strategic hinge loss (Eq.\eqref{eq:shinge_ambiguous}).
For the standard (non-ambiguous) strategic classification we used
the standard strategic hinge (Eq.\eqref{eq:shinge_standard}).
For all methods we used $\ell_2$ regularization.

\paragraph{Hyperparameters.}
All methods use similar hyperparameters and the same training procedure:
400 epochs,
batch size  128, learning rate 0.001,
and $\ell_2$ regularization coefficient 0.005.
Temperature for the softmax and softmin was set to 0.2.


\paragraph{Initialization.}
To initialize $\wmin, \wmax $ we sample a random vector $v$ of unit norm, created a $d$ dimensional box which centers in $v$,
and set $\wmin$ by taking the minimum coordinates of the box and $\wmax$ by taking the maximum coordinates of the box. We then set $w$ such that each $w_i$
was sampled uniformly in the range $[\wmin_i,\wmax_i]$.
The offset term was initialized as $b=-0.1$.

\subsection{Training and optimization}

\paragraph{Implementation.} All code was implemented in Python and using PyTorch.

\paragraph{Optimization.}
For optimization, we use ADAM and split the data to mini batches.
After each gradient step we applied a projection step to enforce the parameter constraint
$w \in [\wmin,\wmax]$
and the continuous ambiguity norm constraint $\|\wmax-\wmin\|_q \le \beta$.
Projections (if needed) were computed by solving an appropriate convex program using CVXPY.
The complexity of the ambiguous strategic loss in this case makes the objective mildly sensitive to initialization.
\blue{For this, we ran 10 different random initializations per instance,
and chose the model giving the highest accuracy on the held-out validation set.}

\paragraph{Regularization.}
Continuous ambiguity requires two additional (hard) constraints, implemented as (soft) penalties in the objective.
The first constraint ensures that all possible $h' \in \amb$ agree with the true $h$. This is conceptually similar to the discrete case,
but cannot be implemented via enumeration.
However, a sufficient condition is to calculate $\min_{v \in \amb} v^\top w$, and penalize by it. The argmin is simply the point-wise argmin over entries $\wmin_i,\wmax_i$.
The second constraint ensures that the AoP in each step is not empty.
For this, we use Lemma~\ref{lem:AOP_not_empty_iff}
and penalized based on $\max_i \wmin_i \cdot \wmax_i$.
For both constraints we use an exponentially decaying penalty.

\extended{\todo{write this in math \todoivri}}





\begin{figure}[t!]
    \centering
        \includegraphics[width=0.245\textwidth]{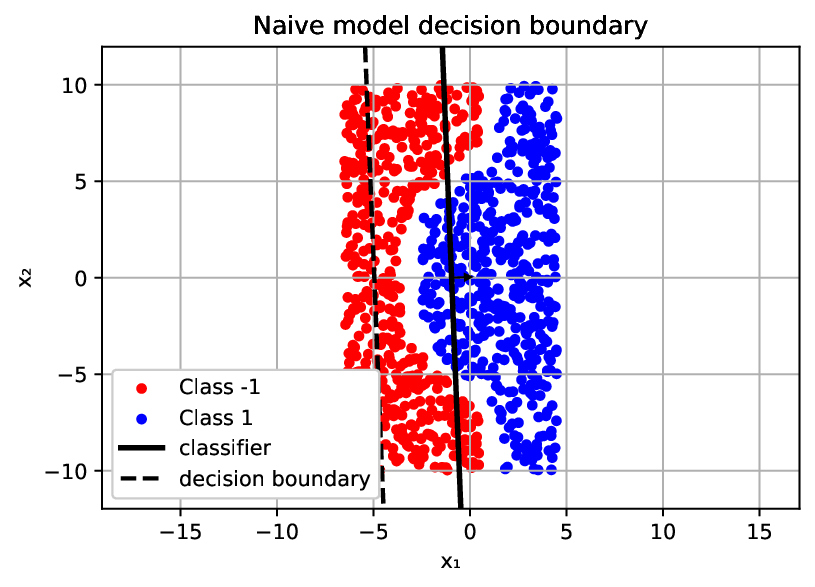}
        \includegraphics[width=0.245\textwidth]{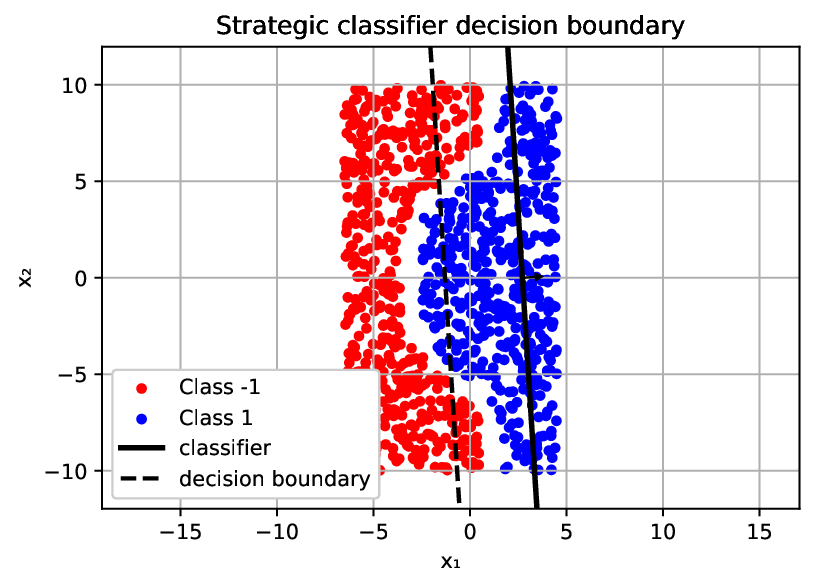}
        \includegraphics[width=0.245\textwidth]{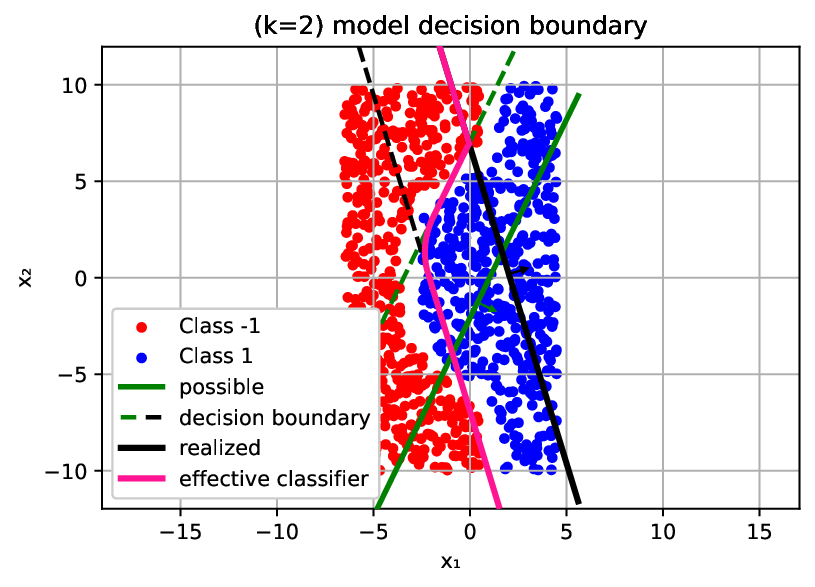}
        \includegraphics[width=0.245\textwidth]{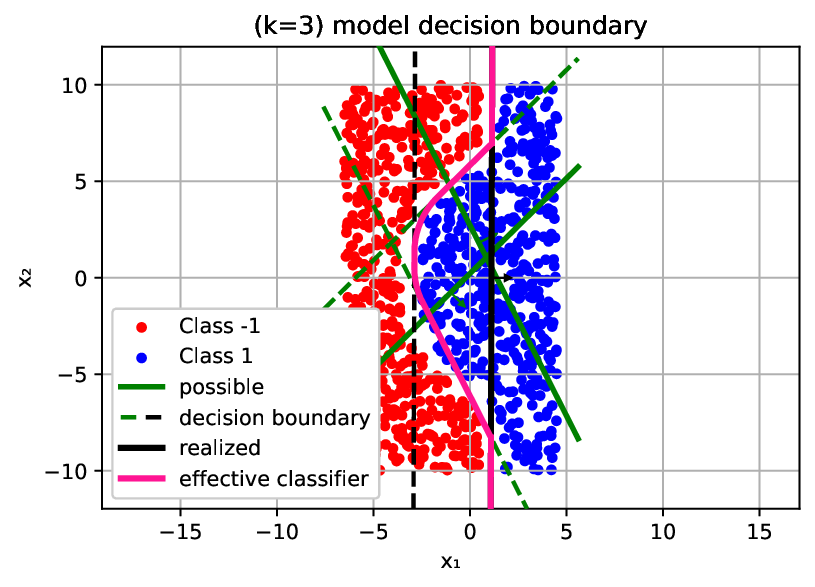} 
    \caption{Learned models for the separable data experiment.
    From left to right: \naive\ (non-strategic), standard strategic ($k=1$), ambiguous strategic with $k=2$, and ambiguous strategic with $k=3$. For $k=2,3$ we also plot the effective classifier that is created based on the ambiguity set.}
    \label{fig:separable_data_results}
\end{figure}

\section{Additional experimental results} \label{appx:more_exps}

\subsection{Separable data} \label{appx:separable_case}
Figure~\ref{fig:separable_data_results} graphically illustrates 
data and learned model for each condition of our experiment.
The \naive\ model (left) assumes no strategic behavior and is therefore highly susceptible to gaming behavior, and reaches 62.9\% accuracy. However, it entails very low burden.
The standard strategic model ($k=1$; center-left) assumes full transparency.
This helps mitigate gaming, and achieves 88.4\% accuracy.
This however results in many false positives,
which induces very high social burden.
An ambiguous strategic classifier with $k=2$ (center-right)
improves to 94.5\% accuracy -- showing that even mild ambiguity can greatly reduce errors, here by almost half.
Finally, the ambiguous strategic classifier with $k=3$ (right) achieves an almost optimal 99.2\% accuracy.
Notice how the realized classifier is in fact the same as the \naive\ classifier, while the possible classifiers help define an effective AoP.
This model does not only give high accuracy,
but also mitigates gaming and has very low social burden.

\begin{figure*}[t!]
  \centering
  \includegraphics[width=0.5\textwidth]{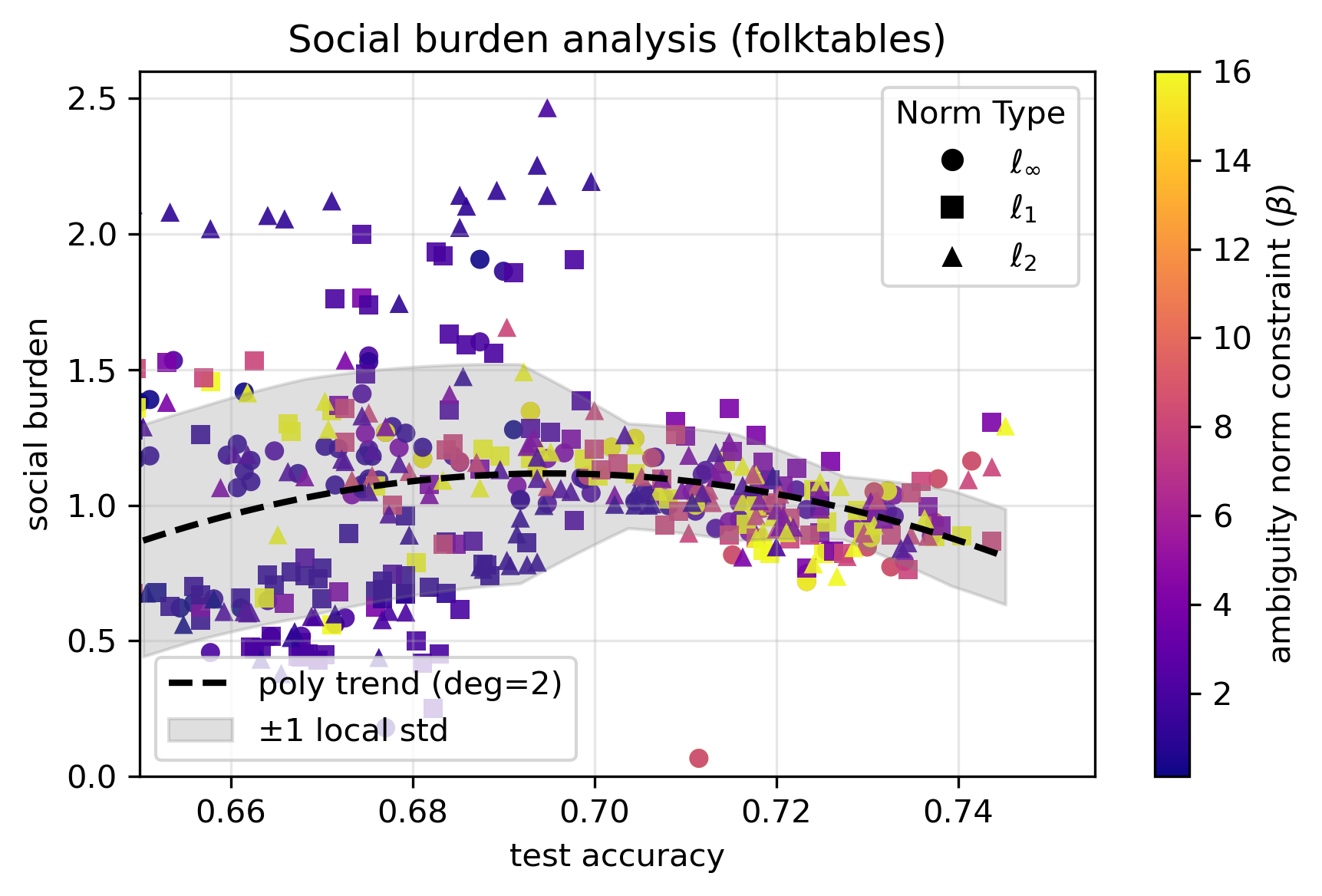}
  \caption{Social burden as a function of the accuracy. We can see that the burden acts in a poly trend with degree = 2, where for higher $\beta$ the points are more centered around this poly.}
  \label{fig:accuracy_vs_burden}
\end{figure*}

\extended{
\subsection{Real data -- weights over epochs}
\todo{...}
}

\subsection{Real data -- social burden analysis} 
\label{appx:real_acc_vs_burden}
Figure \ref{fig:accuracy_vs_burden} plots the relation between accuracy and social burden for learned models corresponding to all experimental conditions and initializations. 
Results show two phenomena.
First, albeit noisy, the general trend is that
social burden is highest at interim accuracy levels.
For high-accuracy models it is consistently lower,
whereas low-accuracy models exhibit two modes of burden, whose average is comparatively low, but its maximal levels are high in absolute terms.
Second, per accuracy level, there is significant variation in social burden.
This suggests that models attaining similar accuracy can induce very different social burden levels.
However, this variation is lower for high-accuracy models.


\begin{figure}[b!]
    \centering
    
    \begin{subfigure}[b]{0.45\textwidth}
        \centering
        \includegraphics[width=0.93\textwidth]{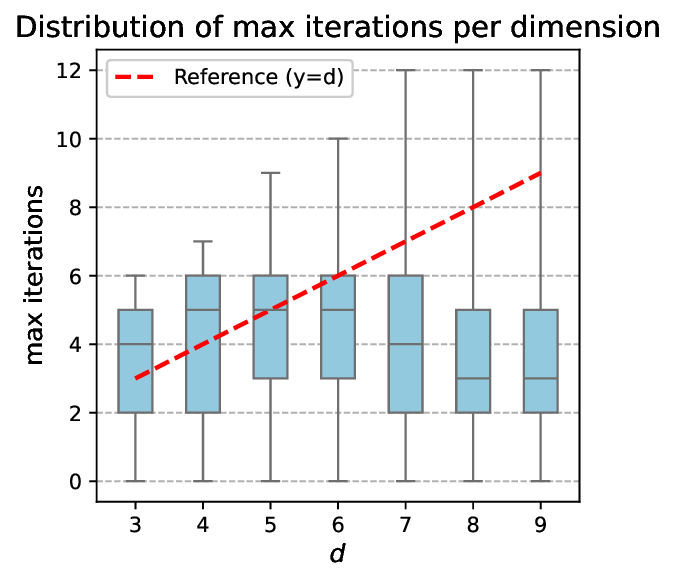}
        \caption{Distribution of max iterations per dimension. Note that even for larger dimensions, the average number of iterations is still small (around 4). The outliers do not exceed $2d$ iterations for every dimension $d$.}
        \label{fig:plot1}
    \end{subfigure}
    \hfill 
    \begin{subfigure}[b]{0.45\textwidth}
        \centering
        \includegraphics[width=0.8\textwidth]{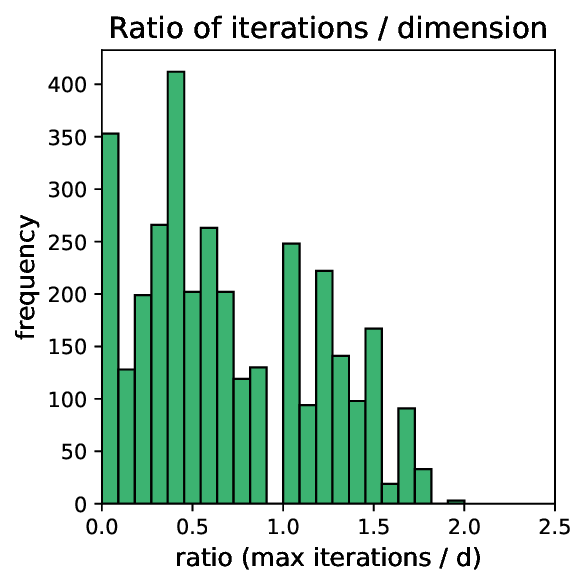}
        \caption{Histogram of the ratio between the number of iterations and the dimension $d$. Note that the bulk of the mass is below 1.0, meaning that the average number of iterations of the algorithm is at most $d$ for every $d$.}
        \label{fig:plot2}
    \end{subfigure}
    
    \caption{Distribution of max iterations per dimension and histogram of the ratio between the max iteration and the dimension.}
    \label{fig:exhaustive_search}
\end{figure}

\section{Runtime analysis of projection algorithm}
\label{appx:cutting_plane_constraint_set}
Although our cutting plane method (Algo.~\ref{algo:projection}) 
does not provide efficient runtime guarantees,
in practice we observed that the number of calls to the convex solver
was very small, both in absolute terms and relative to $d$.
We examined this in two settings:
a designated synthetic setup, and our main experiment from the paper.

\subsection{Synthetic data}
To evaluate the runtime of our algorithm in practice we ran a series of
experiments using synthetic randomly generated data.
For a given dimension $d$ and distribution, we 
sample $x,\wmin,\wmax$, and $b$ (ensuring $\wmin \le \wmax$);
compute the projection $\xproj$ using Algo~\ref{algo:projection};
and count the number of iterations required to terminate.
We repeat this process 10000 times for each experimental condition,
and compute statistics.

Figure~\ref{fig:exhaustive_search} shows results for $d \in 3,\dots,9$
and three distributions: uniform in $[0,1]$, normal, and $\mathrm{Beta}(2,5)$. As can be seen, the maximal number of iterations per instance
is $\sim 4$ on average, and 12 in the worst case.
Note the average behavior is constant (or even decreasing),
whereas the maximum increases only slightly with $d$.
Across all instances, we did not encounter any case where the number of iterations was larger than $2d$, and the distribution is generally skewed towards lower values.

\extended{\todo{add missing details}}

\subsection{Real data}
Fig.~\ref{fig:run_time_folktables} reports similar statistics
but for projections computed while running our main experiment 
on the \dataset{folktables} dataset.
Results show both average and maximum number of iterations;
both are bonded by a small constant across all experimental conditions.





\begin{figure*}[t!]
  \centering
  \includegraphics[width=0.3\textwidth]{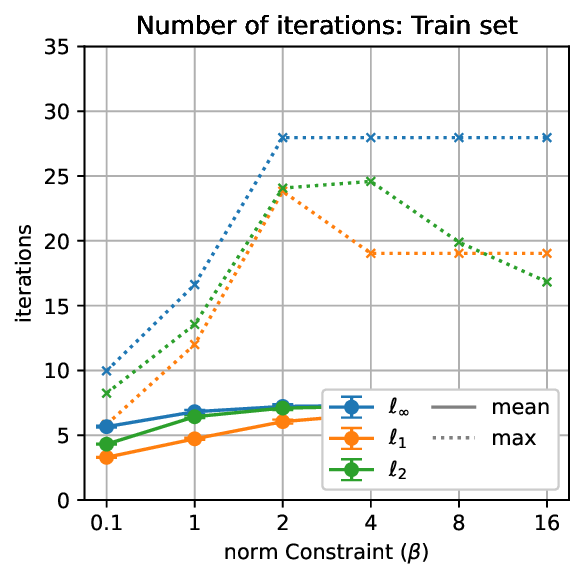}
  \,
  \includegraphics[width=0.3\textwidth]{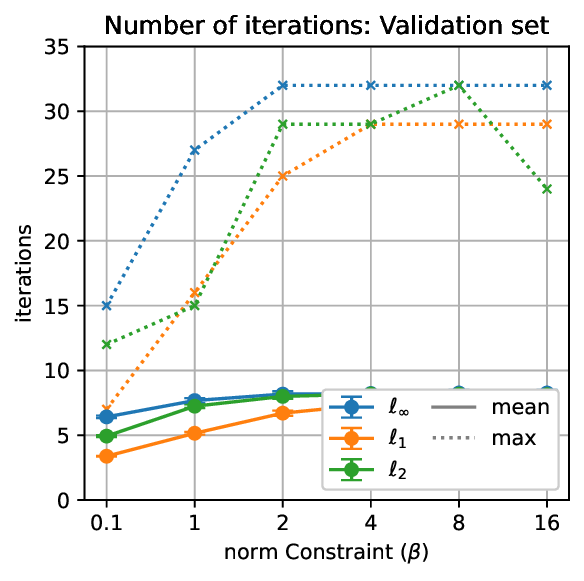}
  \,
  \includegraphics[width=0.3\textwidth]{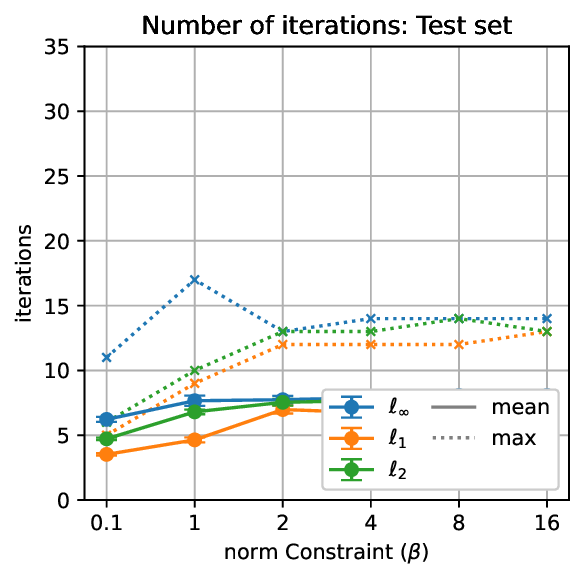}
  \caption{Number of iterations in folktables dataset. Note that even though we used $d=7$ in our data, the average running time in both train, validation and test is around 7. Moreover, the maximal number of iterations is 32. This means that the algorithm works with an averaged complexity of $O(d)$.}.
  \label{fig:run_time_folktables}
\end{figure*}

\section{Loss function approximation analysis}
\label{appx:loss_func_analysis}
In this section we provide a rudimentary theoretical analysis of our approximated loss function $\ahingeapx$ (Eq.~\eqref{eq:shinge_ambiguous_apx}) relative to the true target loss $\ahinge$ (Eq.~\eqref{eq:shinge_ambiguous}).
We complement this with an empirical investigation of approximation quality in practice, for which we also provide an algorithm for computing the distortion in loss values.
To simplify the analysis we focus on $\ell_2$ costs, assume all the true $h_{w,b}$ and all other $h_{v,a} \in \amb$ have the same norm $\|w\|_2=\|v\|_2$, and for some of the results consider only the case of $d=2$.
Overall, the analysis shows that the approximation is often exact, and that in the worst case it cannot be worse than the standard, non-ambiguous $\shinge$.

\subsection{Theoretical analysis}
Recall that given $h$ and $\amb \ni h$, the positive region can be defined as $A \cup B$, where $A$ is the positive region of $h$ and $B$ is the Minkowsky sum of $AoP(\amb)$ and an $\ell_2 $ ball of radius $\frac{2}{\alpha}$. Since the AoP is a polytope, $B$ is an expanded, “rounded” polytope with ball segments instead of vertices. The approximation $\widetilde{F}$ outer-bounds these “corner” ball segments by extending the faces of the (rounded) polytope to obtain a standard polytope. Note our assumption on classifier agreement implies ball segments are restricted to an orthant.
This is illustrated in Figure~\ref{fig:a-hinge} (right).

Consider a point $x$,
and denote the true and approximate ambiguous loss values by $L^*$ and $\widetilde{L}$. Define the distortion as $R = |\widetilde{L} - L^*|$. We begin with some observations, and then provide a formula for $R$.

Denote by $x'$ the projection of $x$ onto $F$ in the direction of $w$. First, notice that $\widetilde{F}$ and $F$ agree on all the faces of $\widetilde{F}$. This implies the following observation:
\begin{observation}
    $\widetilde{L}$ might be approximate only if $x'$ lies on a ball segment.
\end{observation}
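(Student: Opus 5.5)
The plan is to prove the contrapositive: if the directional projection $x'$ of $x$ onto the boundary of $F$ (along $w$) lies on a flat face of $F$, then $\widetilde{L}=L^*$. Since the two losses differ only through the reward, it suffices to show that the directional margins agree, i.e.\ $\ttilde = t^*$. Then $\ztilde=z^*$ and $\rtilde_x(\amb)=r_x(\amb)$.

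First I will fix the geometry, using the decomposition of the positive region as $A\cup B$. Here $A=\{w^\top z+b\ge 0\}$, and $B=\aop(\amb)\oplus \mathbb{B}_2(2/\alpha)$. The boundary of $B$ splits into two kinds of pieces. The flat pieces are portions of the shifted hyperplanes $v^\top z + a + \frac{2}{\alpha}\|v\|_2=0$, namely the translates of the AoP's facets by $\frac{2}{\alpha}$ along their outward normals. The remaining pieces are ball segments centered at the lower-dimensional faces of the AoP.

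Next I will show that $\widetilde{B}=\{z:\ v^\top z+a+\frac{2}{\alpha}\|v\|_2\ge 0\ \forall (v,a)\in\amb\}$ contains $B$. This holds because every point within distance $2/\alpha$ of a point $p\in\aop$ satisfies $v^\top z + a\ge v^\top p + a - \frac{2}{\alpha}\|v\|_2\ge -\frac{2}{\alpha}\|v\|_2$. Moreover, every flat facet of $\partial B$ lies on $\partial\widetilde{B}$. Hence $F$ and $\Ftilde$ share their flat faces, and the region where $\Ftilde>0 \ge F$ sits only between the ball segments and the corresponding corners of $\widetilde{B}$.

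Then I will argue about the ray $\{x+tw\}$. By definition, $t^*$ is the first $t$ at which the ray reaches $\{F\ge0\}$. Because $\{F\ge 0\}\subseteq\{\Ftilde\ge 0\}$, we get $\ttilde\le t^*$. If $x'=x+t^*w$ lies on a flat face, then $x'$ is a point of $\partial\{\Ftilde\ge0\}$. To conclude, I must rule out the ray having entered $\{\Ftilde\ge0\}$ strictly earlier, in one of the corner regions. Here I will use weak agreement, $w^\top v\ge 0$: each function $t\mapsto v^\top(x+tw)+a+\frac{2}{\alpha}\|v\|_2$ is then nondecreasing. It follows that the max over $(v,a)$, and the min with $w^\top z+b$ (which is increasing), are both monotone along the ray. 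So $\{t:\Ftilde(x+tw)\ge 0\}$ is a single upward interval, and the same holds for $F$ by convexity and monotonicity of $B$ along $w$. Combined with the fact that $x'$ is a boundary point of both sets, monotonicity forces $\ttilde=t^*$. This in turn gives equal rewards and equal losses, and the closed form of Eq.~\eqref{eq:Tilde_t} computes exactly this first-hitting time.

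The main obstacle is the last step. A point on a flat face of $F$ could, a priori, coincide with an interior point of a corner region of $\Ftilde$, where the extended faces overhang; in that case $x'$ would be on $\partial F$ but not on $\partial\Ftilde$. I would handle this by showing that the face of $\partial B$ containing $x'$ is a subset of its supporting shifted hyperplane $H_v$, and that $\widetilde{B}\subseteq\{v^\top z+a+\frac{2}{\alpha}\|v\|_2\ge0\}$. Any point of that face is therefore on $\partial\widetilde B$. A similar check is needed for the union with $A$, where a face of $h$ is treated identically.
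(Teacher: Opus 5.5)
Your route is the paper's own one-line justification (``$F$ and $\Ftilde$ agree on the faces of $\Ftilde$'') made precise. For $x'$ on a flat face of $\partial B$ it is correct. That face lies in a shifted hyperplane $H_v$, and $\widetilde{B}\subseteq\{v^\top z+a+\frac{2}{\alpha}\|v\|_2\ge 0\}$, so every earlier point of the ray violates the $v$-constraint and $\ttilde=t^*$. This needs $v^\top w>0$, not just $\ge 0$: otherwise the ray can slide along $H_v$ through an overhang, and $t_{v,a}$ is undefined anyway.

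The gap is your last sentence: a face of $h$ is \emph{not} treated identically. The hyperplane $\{w^\top z+b=0\}$ does not support $\widetilde{B}$; only its shift by $\frac{2}{\alpha}\|w\|_2$ does. So when $t^*=t_{w,b}<t_B$, the ray may enter a corner overhang $\widetilde{B}\setminus B$, at a vertex of the AoP lying strictly inside $A$, before it reaches $\partial A$. Then $x'$ is an \emph{interior} point of $\{\Ftilde\ge 0\}$ and $\ttilde<t^*$.

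This does happen. Take $d=2$, $\alpha=2$, $w=(0,1)$, $b=0$, and $\amb=\{h,h_{v_1,a},h_{v_2,a}\}$ with $v_{1,2}=(\pm\frac{\sqrt{3}}{2},\frac{1}{2})$ and $a=-\frac{3}{4}$. All weights have unit norm and $w^\top v_i=\frac{1}{2}$, so the paper's assumptions hold.
\begin{itemize}
\item The AoP is $\aop(\amb)=\{z_2\ge \frac{3}{2}+\sqrt{3}|z_1|\}$, so $B\subseteq\{z_2\ge\frac{1}{2}\}$.
\item Hence $\{F\ge 0\}=A$, and $\partial F$ contains no ball segment at all.
\item Yet $\widetilde{B}=\{z_2\ge-\frac{1}{2}+\sqrt{3}|z_1|\}$ protrudes below $A$.
\item For $x=(0,-1)$ you get $t^*=1$ and $r_x(\amb)=0$.
\item But $\ttilde=\min\{1,\frac{1}{2}\}=\frac{1}{2}$ and $\rtilde_x(\amb)=\frac{1}{2}$.
\item With $y=1$ this gives $L^*=2\neq\frac{3}{2}=\widetilde{L}$, while $x'=(0,0)$ lies on the flat face of $h$.
\end{itemize}

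So this step cannot be repaired: the observation as stated fails when $x'$ lies on $A$. The paper's justification has the same hole, since the extended faces of such an overhang are faces of $\Ftilde$ on which $F<0$.

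What your argument does prove is exactness in two cases:
\begin{itemize}
\item when $x'$ lies on a flat face of $\partial B$ (with strict agreement);
\item when $\ttilde$ is attained by the outer min, because $\ttilde\le t^*\le t_{w,b}$ always.
\end{itemize}
The correct characterization is phrased in terms of $\tilde z$ rather than $x'$: $\rtilde_x(\amb)\neq r_x(\amb)$ iff $\ttilde<t^*$ iff $\tilde z\in\widetilde{B}\setminus(A\cup B)$. In words, the reward is distorted exactly when the approximate hitting point falls in a corner overhang.
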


Conversely, if $x'$ lies on a face of $B$ or on $A$, then $\widetilde{L}$ is exact and $R=0$. We expect this to be the common case, as in our experiments below.

The following lemma upper-bounds to the reward $\Tilde{r}_x(\Gamma)$.
\begin{lemma}
\label{lem:reward_smaller_than_shinge}
Let $\amb$ be a discrete ambiguity set, and consider some $h \in \amb$.
    Then for all $x$, the approximate reward $\Tilde{r}_x(\Gamma)$ is never larger than the reward term in the non-ambiguous strategic hinge, $\shinge$.
\end{lemma}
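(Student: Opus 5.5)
The plan is to compare the approximate reward directly with the standard strategic reward $\frac{2}{\alpha}\|w\|_2$ through the candidate margins of Eq.~\eqref{eq:candidate_margins}. The key fact is that the true classifier $h=h_{w,b}$ itself belongs to $\amb$. So the inner maximization in Eq.~\eqref{eq:Tilde_t} always has the shifted true classifier available as a candidate. That candidate is exactly what produces the standard strategic margin.

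First I fix the reading of the reward that is consistent with the remark after Eq.~\eqref{eq:reward_amb}, namely that $\amb=\{h\}$ must give $\frac{2}{\alpha}\|w\|_2$. I read it as $\rtilde_x(\amb)=\max\{0,-(w^\top\ztilde+b)\}$. Since $\ztilde=x+\ttilde w$, the score at $\ztilde$ is
\[
w^\top\ztilde+b=w^\top x+b+\ttilde\,\|w\|_2^2 .
\]
Bounding the reward from above therefore reduces to bounding $\ttilde$ from below. Define the shifted true-classifier margin
\[
t_h^{s}=-\frac{w^\top x+b+\frac{2}{\alpha}\|w\|_2}{w^\top w}.
\]
Substituting $\ttilde=t_h^{s}$ gives $-(w^\top\ztilde+b)=\frac{2}{\alpha}\|w\|_2$. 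So it suffices to show $\ttilde\ge t_h^{s}$.

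To prove $\ttilde\ge t_h^{s}$, I handle the two branches of the min in Eq.~\eqref{eq:Tilde_t} separately.
\begin{itemize}
\item Unshifted branch: $t_{w,b}=-(w^\top x+b)/\|w\|_2^2$ exceeds $t_h^{s}$ by $\frac{2}{\alpha}\|w\|_2/\|w\|_2^2\ge 0$.
\item Max branch: because $h_{w,b}\in\amb$, the maximum over $(v,a)\in\amb$ includes the term with $(v,a)=(w,b)$. For that term the denominator is $v^\top w=\|w\|_2^2$, so the term equals $t_h^{s}$. Hence the maximum is at least $t_h^{s}$.
\end{itemize}
Both arguments of the min are at least $t_h^{s}$, so $\ttilde\ge t_h^{s}$. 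Plugging this in gives $-(w^\top\ztilde+b)\le\frac{2}{\alpha}\|w\|_2$, and since $\frac{2}{\alpha}\|w\|_2\ge 0$, taking $\max\{0,\cdot\}$ preserves the bound.

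I expect the main obstacle to be interpretive rather than technical. The displayed formula for the reward has a sign typo, so I must fix the reading above and check it against the $\amb=\{h\}$ sanity check. I should also check that the weak-agreement assumption ($v^\top w>0$) keeps each $t_{v,a}$ well defined. It is worth noting that the lemma is stated for the hard max and min. With the softmax/softmin relaxation the monotonicity argument no longer holds verbatim, so I would restrict the claim to the hard version, as in Eq.~\eqref{eq:Tilde_t}.
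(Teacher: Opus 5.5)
Your proposal is correct and takes essentially the paper's approach. Both arguments rest on the fact that $h\in\amb$ places the shifted true-classifier margin, which yields exactly $\frac{2}{\alpha}\|w\|_2$, inside the inner max, so additional candidates or the outer min can only raise $\ttilde$ and hence lower $\rtilde_x(\amb)$. Your single bound $\ttilde\ge t_h^{s}$ just packages the paper's incremental case split (new inner argmax vs.\ outer argmin with reward $0$), and your corrected reading $\rtilde_x(\amb)=\max\{0,-(w^\top\ztilde+b)\}$ is the one the paper's $\amb=\{h\}$ sanity check requires.
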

\begin{proof}
    Recall that the reward in the non-strategic $\shinge$ is
    $\rtilde=\frac{2}{\alpha}\|w\|_2$.
    Begin by considering $\amb=\{h\}$. then $\ttilde=\min\{t_{w,b},\max\{t'_{w,b}\}\}$
    where $t'$ is shifted. In this case $\ttilde$ is always set by $t'$, and $\rtilde=\frac{2}{\alpha}\|w\|_2$.
    
    Now consider a general discrete ambiguous set $\amb$. Adding another candidate $t'_{v,a}$ to the $\max$ term in 
    Eq.~\eqref{eq:candidate_margins} either:
    \begin{itemizetight}
        \item Increases $\ttilde$ (by serving as the new inner argmax), and therefore decrease $\rtilde$, and so  
        $\rtilde \le \frac{2}{\alpha}\|w\|_2$.
        \item Causes $\ttilde$ to be set by $t_{w,b}$ (i.e., as the outer argmin), and therefore the reward $\rtilde = 0 \leq \frac{2}{\alpha}\|w\|_2$.
    \end{itemizetight}
\end{proof}


\begin{corollary}
For discrete ambiguity, $\ahingeapx$ is never a worse approximation to $\ahinge$ than $\shinge$.
\end{corollary}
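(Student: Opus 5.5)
The plan is to compare three reward terms pointwise: the target reward $r_x(\amb)$ in $\ahinge$, the approximate reward $\rtilde_x(\amb)$ in $\ahingeapx$, and the constant reward $\frac{2}{\alpha}\|w\|_2$ in $\shinge$. All three losses share the form $\max\{0,1-y(w^\top x+b+\rho)\}$ with $\rho$ the reward. The map $\rho\mapsto \max\{0,1-y(w^\top x+b+\rho)\}$ is 1-Lipschitz and monotone in $\rho$, with direction fixed by $y$. So it suffices to show that $\rtilde_x(\amb)$ always lies between $r_x(\amb)$ and $\frac{2}{\alpha}\|w\|_2$. Then $|\ahingeapx-\ahinge|\le|\shinge-\ahinge|$ follows immediately, for either sign of $y$.

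\textbf{Step 1: upper bound.} Lemma~\ref{lem:reward_smaller_than_shinge} gives $\rtilde_x(\amb)\le \frac{2}{\alpha}\|w\|_2$ for every $x$.

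\textbf{Step 2: lower bound.} I need $r_x(\amb)\le \rtilde_x(\amb)$, and this comes from the outer-approximation property. By construction $\Ftilde$ is obtained by extending the faces of the rounded set $B$ (the Minkowski sum of $\aop(\amb)$ with a ball of radius $2/\alpha$) and then taking the union with the halfspace of $h$. Hence the effective positive region $\{F\ge0\}$ is contained in $\{\Ftilde\ge 0\}$. Under weak agreement, every candidate margin in Eq.~\eqref{eq:candidate_margins} has a nonnegative denominator $v^\top w$, so moving along $+w$ never leaves either region once inside. The directional hitting time $t^*$ for $F$ is therefore at least the hitting time $\ttilde$ for $\Ftilde$. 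Since the reward in Eq.~\eqref{eq:reward_amb_apx} is $\max\{0,-w^\top(x+tw)+b\}$ and is nonincreasing in $t$ (its slope is $-\|w\|^2$), we get $\rtilde_x\ge r_x$.

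\textbf{Step 3: the claim itself.} Chain the two bounds: $r_x\le\rtilde_x\le \frac{2}{\alpha}\|w\|_2$. By monotonicity of the hinge in the reward, $\ahingeapx$ lies between $\ahinge$ and $\shinge$, which gives the claim pointwise. It then also holds for empirical averages.

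\textbf{Main obstacle.} Step 2 needs care on three points.
\begin{itemize}
\item I must make precise that $\{F\ge0\}\subseteq\{\Ftilde\ge0\}$, using that each shifted halfspace $v^\top x+a+\frac{2}{\alpha}\|v\|_2\ge0$ contains the $2/\alpha$-neighborhood of $\{v^\top x+a\ge0\}\supseteq\aop(\amb)$. This is an intersection of supersets, unioned with $A$.
\item I must check that the directional minimizations in Eq.~\eqref{eq:margin} are well defined, i.e.\ that the ray enters each region exactly once. Weak agreement gives this, since each region is an upward-closed intersection or union of halfspaces along $w$.
\item Sign conventions in Eq.~\eqref{eq:reward_amb} are loose in the text. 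I would fix the reward as the score gain $-(w^\top x+b)+\ldots$ consistent with recovering $\frac{2}{\alpha}\|w\|_2$ when $\amb=\{h\}$, and verify monotonicity in $t$ for that reading.
\end{itemize}
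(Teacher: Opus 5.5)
Your proposal is correct and follows the paper's route. The paper derives the corollary directly from Lemma~\ref{lem:reward_smaller_than_shinge}, i.e.\ the bound $\rtilde_x(\amb)\le\frac{2}{\alpha}\|w\|_2$, together with monotonicity of the hinge in the reward. Your Step~2 shows $r_x(\amb)\le\rtilde_x(\amb)$, using $\{F\ge 0\}\subseteq\{\Ftilde\ge 0\}$ and that both regions are upward-closed along $w$ under weak agreement. This makes explicit the outer-approximation half of the sandwich, which the paper only mentions informally when describing $\Ftilde$ but which is genuinely needed: the upper bound alone does not control $|\ahingeapx-\ahinge|$.
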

This immediately follows from the lemma.

For continuous ambiguity, Lemma~\ref{lem:reward_smaller_than_shinge} applies whenever the true $h$ appears in the set over which the inner max is taken.
This occurs via Algo.~\ref{algo:projection}
when the true $h$ lies on the boundaries of the range $[\wmin, \wmax]$ (i.e, each coordinate of $w$ is either $\wmin_i$ or $\wmax_i$) and added to the set of constraints during the algorithm.
We note that $h$ can also be added to the set manually,
but found that empirically this did not provide any benefits in performance.

Next, we provide an explicit upper bound on $R$ for the case of $d=2$.
\begin{lemma}
\label{lem:appx_of_loss}
    Let $x$ with $x' = x + t^* w$ on a ball segment $C$ whose center $c$ is a vertex of the AoP. Define by $\beta$ the angle between the line from $c$ to $x'$ and the normal of $-w$. Then:
    $$|\widetilde{L} - L^*| \leq \frac{2}{\alpha} \|w\|(1-\cos(\beta))$$
    and
    $$\beta \leq \max_{h_1,h_2 \in H(c)} 180^\circ -angle(h_1, h_2)$$
    Where $H(c)$ includes all classifiers from $\amb$ intersecting at $c$.
\end{lemma}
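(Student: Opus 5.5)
We work in the plane ($d=2$) under the section's simplifying assumptions: $\ell_2$ costs and $\|v\|_2=\|w\|_2$ for all $h_{v,a}\in\amb$. Write $\rho=\frac{2}{\alpha}$ for the radius of the rounding ball.

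\textbf{Step 1: reduce the loss gap to a gap in $t$.} Both losses have the form $\max\{0,1-y(w^\top x+b+r)\}$, and this map is $1$-Lipschitz in $r$. Hence
\[
|\widetilde{L}-L^*|\le|\rtilde_x(\amb)-r_x(\amb)|.
\]
The rewards are $\max\{0,-w^\top z+b\}$ with $z=x+tw$ (reading the reward as the score drop $-w^\top(z-x)$). So the reward gap is at most $\|w\|^2|\ttilde-t^*|$.

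In the direction $w$, $\ttilde$ and $t^*$ are the parameters at which the ray from $x$ hits $\Ftilde=0$ and $F=0$, respectively. Since $\Ftilde$ outer-bounds $F$ and the two agree except near the corner $C$, the two hitting points $\tilde{x}'$ and $x'$ lie on a common line parallel to $w$. The gap $|\ttilde-t^*|\,\|w\|$ is the length of the segment $\tilde{x}'x'$. It therefore suffices to show
\[
\|\tilde{x}'-x'\|\le\rho\bigl(1-\cos\beta\bigr),
\]
up to the conversion factor $\|w\|$ between distance and score units.

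\textbf{Step 2: the geometry inside the corner.} Place $c$ at the origin. Take the unit vector $u=-w/\|w\|$, the outward direction opposite to the motion. On $C$,
\[
x'=c+\rho(\cos\beta\,u+\sin\beta\,u^\perp).
\]
The relevant quantity is the distance between $x'$ and the polyhedral cap of $\Ftilde$, measured along $w$. The cap is formed by the two shifted lines $v_j^\top z+a_j+\rho\|v_j\|=0$, each tangent to the circle. Along direction $u$, the support of the circle is $c+\rho u$, at depth $\rho$ below $c$. The point $x'$ sits at depth $\rho\cos\beta$.

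The approximation lies outside the circle, so its hitting point is at least as far out as depth $\rho\cos\beta$. We need it to be at most depth $\rho$. The natural argument: the tangent lines bounding the cap meet at a vertex, and every point of the cap lies within the supporting strip of width $\rho$ in direction $u$. This requires the cap vertex to be no further than $\rho$ from $c$ in direction $u$. That looks false for sharp corners, since the tangent lines' intersection lies at distance $\rho/\cos(\theta/2)$.

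I therefore expect the intended argument to instead compare against the nearest supporting geometry: \emph{$F$ and $\Ftilde$ differ along the ray by at most $\rho-\rho\cos\beta$}. This is the main obstacle. I would try to establish it by showing that the ray through $x'$ exits $\Ftilde$ on the tangent line whose normal is closest to $u$. Weak agreement $w^\top v\ge0$ confines the segment to one orthant, which should give this.

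\textbf{Step 3: bound $\beta$.} The arc $C$ spans exactly the normal cone at vertex $c$, i.e.\ the angular range between the normals of adjacent classifiers $h_1,h_2\in H(c)$. The normal cone at a polygon vertex with interior angle $\gamma$ has opening $180^\circ-\gamma$. By weak agreement, $u$ lies within this cone (the direction $-w$ makes nonnegative inner product with each normal). Hence $\beta$, the angle from $u$ to a point of the arc, is at most the cone's opening, which is at most $\max_{h_1,h_2\in H(c)}\bigl(180^\circ-\mathrm{angle}(h_1,h_2)\bigr)$.
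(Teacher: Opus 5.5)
Your Step~1 reduction is fine, and in Step~2 you set up the right quantity. With $\rho=\frac{2}{\alpha}$ and $u=-w/\|w\|$, the point $x'$ lies at depth $\rho\cos\beta$ below $c$ in direction $u$, so it suffices that the ray enters $\Ftilde$ at depth at most $\rho$. The gap is that you then drop this supporting-strip argument. In its place you put an unproven claim: that the ray crosses $\Ftilde$ on the shifted line whose normal is closest to $u$. That claim is false in general. If $x'$ is the arc endpoint where the shifted $h_2$ line is tangent, the ray enters the cap right there, through the $h_2$ facet, even when $h=h_1$ is the facet orthogonal to $u$. Weak agreement does not supply the claim either.

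The missing idea is to use the true classifier itself. We have $h\in\amb$, and its boundary passes through the vertex, i.e.\ $h\in H(c)$; the paper's proof assumes this implicitly when it sets $|Dc|=\frac{2}{\alpha}\|w\|$. Then the shifted hyperplane $w^\top z+b+\frac{2}{\alpha}\|w\|_2=0$ is one of the constraints cutting out the cap. It is orthogonal to $w$ and tangent to the ball exactly at $c+\rho u$. The intersection of the shifted halfspaces lies inside this halfspace, so the ray cannot enter $\Ftilde$ before crossing this line at the paper's point $E$. Hence $\tilde{x}'$ lies between $E$ and $x'$, and the gap is at most $|x'E|=\rho-\rho\cos\beta$. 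The paper gets the same value via $|cM|=\rho/\cos\beta$, $|x'M|=\rho(1/\cos\beta-1)$ and $|x'E|=|x'M|\cos\beta$. The cap vertex at distance $\rho/\cos(\theta/2)$ that worried you is exactly what this extra facet cuts off.

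Your worry is legitimate when $h\notin H(c)$. Suppose $c$ lies strictly inside the positive region of $h$ but within $\rho$ of its boundary. Then one can have $x'=c+\rho u$ (so $\beta=0$) and still a strictly positive gap. So this hypothesis is essential and you should state it.

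Step~3 rests on the same misuse of weak agreement. The condition $w^\top v\ge0$ only says that $u$ has nonnegative inner product with every generator $-v/\|v\|$ of the normal cone at $c$. That places $u$ in the dual of the cone, not in the cone itself. For example, two nearly parallel normals, each about $60^\circ$ from $w$, satisfy agreement; yet then $\beta$ is about $60^\circ$ while the cone's opening is tiny. The correct reason is again $h\in H(c)$: then $u$ is itself one of the generators. So the angle from $u$ to any direction on the arc is at most the cone's opening, $\max_{h_1,h_2\in H(c)}\bigl(180^\circ-\mathrm{angle}(h_1,h_2)\bigr)$.

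With $h\in H(c)$ made explicit, both steps go through. Your argument then becomes essentially the paper's, in a slightly more direct depth-comparison form.
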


\begin{proof}
    Since $x,y,w$ are the same for $L^*$ and $\widetilde{L}$, it suffices to bound the difference, measured in margin units, between the reward term of $L^*$, denoted by $r_x^{*}(\amb)$ and the reward term of $\widetilde{L}$ denoted by $\Tilde{r}_x(\amb)$. This will be an upper bound for the difference in the loss function (since we use hinge loss, and the reward term may cause both $L^*$ and $\widetilde{L}$ to be 0).
 
     Let $M$ be the point on the line segment from $c$ to $x'$ that intersects the hyperplane $w^Tx+b+\frac{2}{\alpha}\|w\|$. Let $D$ be the orthogonal projection of $C$ into that hyperplane. Finally, let $E$ be the intersection between this hyperplane and the line passing through $x'$ in the direction $-w$. An illustration is shown in Figure~\ref{fig:fig_for_loss}.

    Note that $|\widetilde{L} - L^*| \leq |r_x^{*}(\amb) - \Tilde{r}_x(\amb)|= |x'E|$ in margin units.

    Now using geometric claims:
    \begin{itemize}
        \item $\cos \beta = \frac{|Dc|}{|Mc|}$. $|Dc| = \frac{2}{\alpha}\|w\|$ which leads to $|Mc| = \frac{\frac{2}{\alpha} \|w\|}{\cos \beta}$.
        \item Since $|x'c| = \frac{2}{\alpha}\|w\|$, we have $|x'M| = \frac{2}{\alpha}\|w\|(\frac{1}{\cos\beta} - 1)$
        \item Finally, by the definition of $\beta$,  $\cos\beta = \frac{|x'E|}{|x'M|}$.
    \end{itemize}

    This leads to 
    $$|x'E| = \frac{2}{\alpha}\|w\|(1-\cos\beta)$$

    Moreover, let $\theta_i$ denote the angle between each pair of classifiers that intersect at $c$. By the geometry of the construction, for every such angle $\theta_i$, $ 180^\circ +\beta + \theta_i \leq 360^\circ$.
    Equivalently $$\beta \leq \max_{h_1,h_2 \in H(c)} 180-angle(h_1, h_2)$$
    And this completes the proof.

\end{proof}

\begin{figure*}[t!]
  \centering
  \includegraphics[width=0.4\textwidth]{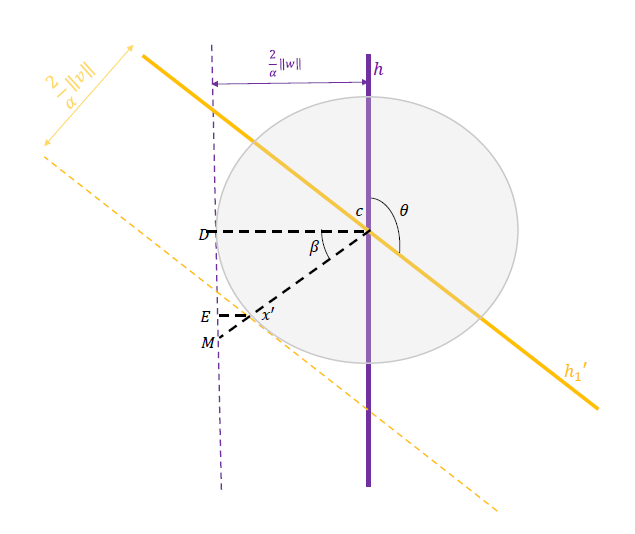}
  \caption{Illustration of the construction used in Lemma~\ref{lem:appx_of_loss}}
  \label{fig:fig_for_loss}
\end{figure*}

The following observation follows directly from the proof:

\begin{observation}
   The bound is tight when the true classifier $h$ belongs to $H(c)$, and determines $\Tilde{t}$. 
\end{observation}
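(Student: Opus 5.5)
The statement to prove is the final observation: the bound of Lemma~\ref{lem:appx_of_loss} is tight when the true $h$ belongs to $H(c)$ and determines $\ttilde$. I would obtain this by rerunning the geometric chain from the proof of Lemma~\ref{lem:appx_of_loss} and checking that every inequality becomes an equality under this hypothesis.

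That proof uses exactly two inequalities. The first is $|\widetilde{L}-L^*| \le |r_x^*(\amb)-\rtilde_x(\amb)| = |x'E|$; this is just the Lipschitz property of the hinge. The second is the angular bound $\beta \le \max_{h_1,h_2\in H(c)} 180^\circ - \mathrm{angle}(h_1,h_2)$, which comes from $180^\circ+\beta+\theta_i\le 360^\circ$. The identity $|x'E| = \frac{2}{\alpha}\|w\|(1-\cos\beta)$ in between is exact. So tightness amounts to two things: the hinge is active for the point in question, and the angle inequality is attained.

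The first step is to show that, when $h$ determines $\ttilde$, the face of $\Ftilde$ hit by the ray from $x$ along $w$ is the shifted hyperplane $w^\top z + b + \frac{2}{\alpha}\|w\|=0$ itself. With $h\in H(c)$, the point $E$ defining $\rtilde_x$ then lies exactly on the face through which the extended polytope cuts off the ball segment $C$ centered at $c$. As a result, $|r^*_x-\rtilde_x|$ equals $|x'E|$ in margin units, with no slack from a different face.

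The second step handles the angle. Since $h$ is one of the classifiers meeting at $c$, its shifted face is tangent to $C$ at the point $D$ whose direction from $c$ is $-w$. The ball segment $C$ is bounded by the normals of two adjacent classifiers at $c$, one of which is $h$. The extreme direction of $c\to x'$ is the normal of the other classifier $h_2$. Its angle to $-w$ is $180^\circ-\mathrm{angle}(h,h_2)$, so choosing $x$ so that $x'$ lies on that boundary ray makes $\beta$ attain the maximum. I would also pick $x$ with $y=1$ and margin small enough that the hinge is positive for both rewards, so the first inequality holds with equality.

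The main obstacle is the first step: confirming that $h$ determining $\ttilde$ really places $E$ on $h$'s shifted face rather than on another face of $\Ftilde$. I would settle this using the min/max structure of Eq.~\eqref{eq:Tilde_t} together with weak agreement.
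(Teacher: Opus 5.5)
There is a genuine gap, and it is in your second step. You read ``tight'' as also requiring the angular bound to be attained, and that requirement contradicts the hypothesis that $h$ determines $\ttilde$.

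\textbf{Why the boundary-ray choice fails.} Write $h_2=h_{v,a}$ and $\hat v=v/\|v\|$, and put $x'=c-\frac{2}{\alpha}\hat v$ on the boundary ray you propose. Then $v^\top x'+a+\frac{2}{\alpha}\|v\|=0$, so $x'$ lies on the flat face of the shifted $h_2$, which $F$ and $\Ftilde$ share. Since $v^\top w>0$, the ray from $x$ enters that halfspace exactly at $x'$, after it has already crossed $h$'s shifted hyperplane at $E$. So $h_2$, not $h$, determines $\ttilde$, $\ztilde=x'$, and the distortion is $0$ rather than maximal.

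Quantitatively, take $d=2$ and let $\gamma=180^\circ-\mathrm{angle}(h,h_2)$ be the angle between $w$ and $v$. A direct computation gives $v^\top E+a+\tfrac{2}{\alpha}\|v\|=\tfrac{2}{\alpha}\|v\|\,(1-\cos\gamma-\sin\gamma\sin\beta)$. Hence $h$ can determine $\ttilde$ only if $\sin\beta\le\tan(\gamma/2)$. Because $\tan(\gamma/2)<\sin\gamma$ whenever $0<\gamma<90^\circ$, this forces $\beta<\gamma$. So under weak agreement no point satisfying the hypothesis attains $\beta=180^\circ-\mathrm{angle}(h,h_2)$. The value $\frac{2}{\alpha}\|w\|(1-\cos\gamma)$ can at best be approached as $\gamma\to 90^\circ$, which is consistent with the paper's later remark on orthogonal vertices.

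\textbf{What the observation actually asserts.} The paper's one-line proof gives pointwise equality in the first bound of Lemma~\ref{lem:appx_of_loss}, namely $|r^*_x-\rtilde_x|=\frac{2}{\alpha}\|w\|(1-\cos\beta)$ in margin units, for every $x$ meeting the hypothesis. Your first step already delivers this, and it is immediate rather than the main obstacle:
\begin{itemize}
\item If $h$ determines $\ttilde$, then $\ttilde$ equals the shifted-$h$ candidate, so $\ztilde$ lies on $w^\top z+b+\frac{2}{\alpha}\|w\|=0$.
\item $\ztilde$ also lies on the line through $x$ and $x'$ in direction $w$, so $\ztilde=E$.
\item If some other candidate attained the inner max, $\ztilde$ would sit strictly closer to $x'$ than $E$, and only the inequality $|r^*_x-\rtilde_x|\le|x'E|$ would survive.
\item The assumption $h\in H(c)$ makes this hyperplane tangent to $C$, so $|Dc|$ is exactly $\frac{2}{\alpha}\|w\|$ in margin units, and every remaining step of the chain is an identity.
\end{itemize}

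So drop the attainment requirement in your second step. Keep your caveat that the hinge must be active for both rewards if you want the equality to hold for $\widetilde L$ and $L^*$ rather than only for the rewards.
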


If $h$ is not in $H(c)$, then $c$ lies in the positive region of $h$ since $c$ is a vertex of the AoP. Therefore, $c$ lies on the corresponding face of $\widetilde{F}$, so $R=0$. When $h$ determines $\Tilde{t}$, the angle $\beta$ is taken with respect to it, thus the proof is tight.

Another observation is that the worst-case bound found for $d=2$ matches the one  the found in Lemma~\ref{lem:reward_smaller_than_shinge}. In the $d=2$ case, this occurs only if both conditions hold:
\begin{itemize}
    \item $c$ is an orthogonal vertex.
    \item $x'$ lies near the boundary of $C$ (where it connects to a face).
\end{itemize} 
We expect this to be a rare event; in our experiments below, the average angle was 49.9 degrees.

\subsection{Experiments}

For concreteness we focus on our Gaussian data experiment. Below we detail the procedure and provide a table with all results.

\paragraph{Exactness.} As noted, the approximation of $x$ is exact if its directional projection $x'$ lies on a face of $B$ or on $A$. 
This definition is intuitive, but not easily verified. Instead, we propose an equivalent procedural definition that is computationally feasible.

Given $x$:
\begin{enumerate}
    \item Compute $\widetilde{t}.$ If it derives from the outer $min$, return EXACT.
    \item Compute $x'$ and its projection to the AoP. If only one constraint is tight, return EXACT.
    \item Return APX.
\end{enumerate}

Note all steps above are implementable via the tools we provide in the paper.

\paragraph{Empirical Distortion.} We next examine $R$ for non exact points. Note that computing the distortion requires computing $t^*$. For this, we use two properties:
\begin{enumerate}
    \item The distance of points $z$ on the line between $x$ and $x'$ to the center $c$ of $C$ is convex in $z$.
    \item $t^*$ is exactly at distance $\frac{2}{\alpha}$ from $c$.
\end{enumerate}
Together, this implies we can use line search to find $t^*$.
Importantly, we would like to emphasize again that the ability to compute $t^*$ (for reasonably sized sets) does not imply that learning with $t^*$ is tractable.

\paragraph{Results.}  We applied the above procedure to all points across all experimental conditions. For scaling we report the normalized distortion $\Bar{R} = \frac{|\widetilde{L} - L^*|}{\Bar{L^*}}$ where $\Bar{L^*}$ is the average true loss.

On average, 55\% of points were exact. For non-exact points, the average $\Bar{R}$ was 0.13, and the 95th percentile was 0.61. The results in detail are presented in Table~\eqref{tab:apx_results}.

\begin{table}[t!]
\label{tab:loss_approx_table}
\centering
\caption{Summary of APX results by dimension $d$ and number of classifiers $k$ for the Gaussian experiment}
\label{tab:apx_results}
\begin{tabular}{ccccc}
\toprule
$d$ & $k$ & \% APX & avg & 95\% \\
\midrule
2 & 2 & 13\% & 0.07 & 0.18 \\
2 & 3 & 19\% & 0.07 & 0.26 \\
2 & 4 & 29\% & 0.08 & 0.33 \\
2 & 5 & 53\% & 0.11 & 0.42 \\
2 & 6 & 59\% & 0.12 & 0.52 \\
2 & 7 & 54\% & 0.11 & 0.49 \\
\midrule
3 & 2 & 7\%  & 0.07 & 0.36 \\
3 & 3 & 27\% & 0.08 & 0.22 \\
3 & 4 & 45\% & 0.10 & 0.44 \\
3 & 5 & 64\% & 0.16 & 0.81 \\
3 & 6 & 67\% & 0.20 & 1.00 \\
3 & 7 & 71\% & 0.25 & 1.35 \\
\midrule
4 & 2 & 19\% & 0.07 & 0.21 \\
4 & 3 & 57\% & 0.09 & 0.31 \\
4 & 4 & 68\% & 0.17 & 0.91 \\
4 & 5 & 66\% & 0.19 & 1.00 \\
4 & 6 & 73\% & 0.25 & 1.34 \\
4 & 7 & 53\% & 0.18 & 0.89 \\
\midrule
5 & 2 & 7\%  & 0.05 & 0.20 \\
5 & 3 & 26\% & 0.07 & 0.25 \\
5 & 4 & 32\% & 0.10 & 0.38 \\
5 & 5 & 51\% & 0.13 & 0.67 \\
5 & 6 & 60\% & 0.18 & 0.98 \\
5 & 7 & 63\% & 0.19 & 1.04 \\
\midrule
\multicolumn{2}{c}{average} & 45\% & 0.13 & 0.61 \\
\bottomrule
\end{tabular}
\end{table}


\end{document}